\documentclass[11pt]{article}

\usepackage[margin=1in]{geometry}
\usepackage{amsmath,amssymb,amsthm,mathtools}
\usepackage{bm}
\usepackage{upgreek}
\usepackage{algorithm}
\usepackage{algpseudocode}
\usepackage{thmtools}
\usepackage{hyperref}
\usepackage{cleveref}
\usepackage{mathtools}
\usepackage{xcolor}
\mathtoolsset{showonlyrefs=true}

\usepackage{natbib}

\hypersetup{colorlinks=true, linkcolor=blue, citecolor=blue, urlcolor=blue}

\newcommand{\rmd}{\mathrm{d}}
\newcommand{\R}{\mathbb{R}}
\newcommand{\E}{\mathbb{E}}
\newcommand{\dd}{\textup{d}}

\newcommand{\n}{\nabla}
\DeclareMathOperator{\sg}{\mathrm{sg}}
\newcommand{\cF}{\mathcal{F}}
\newcommand{\cP}{\mathcal{P}}
\newcommand{\iidsim}{\stackrel{\text{i.i.d.}}{\sim}}

\title{On the Wasserstein Gradient Flow Interpretation of Drifting Models}

\author{Arthur Gretton, Li Kevin Wenliang, Alexandre Galashov, James Thornton,\\ Valentin De Bortoli, Arnaud Doucet}

\date{Google DeepMind}

\begin{document}
\maketitle

\begin{abstract}
Recently, \citet{deng2026drifting} proposed Generative Modeling via Drifting (GMD), a novel framework for generative tasks. This note presents an analysis of GMD through the lens of Wasserstein Gradient Flows (WGF), i.e., the path of steepest descent for a functional in the space of probability measures, equipped with the geometry of optimal transport. 
Unlike previous WGF-based contributions, GMD can be thought of as directly targeting a fixed point of a specific WGF flow. We demonstrate three main results: first, that one algorithm proposed by \citeauthor{deng2026drifting} can be viewed as targeting the fixed point of the KL Wasserstein gradient
flow after replacing the intractable scores by Parzen-smoothed score estimates. Second, that the algorithm actually implemented by \citeauthor{deng2026drifting} corresponds to a different procedure, which bears some resemblance to the fixed point of a WGF on the Sinkhorn divergence, but lacks certain desirable properties of the latter.
Third, the same idea can be extended to the limiting point of other WGFs, including the Maximum Mean Discrepancy (MMD), the sliced Wasserstein distance, and GAN critic functions.

\end{abstract}

\section{Introduction}
Generative Modeling via Drifting (GMD), introduced by  \citet{deng2026drifting}, represents a novel and empirically successful approach to generative tasks. The method achieves strong performance without adversarial training, distillation, or multi-step sampling, relying instead on a non-parametric, mean-shift style drift operator to iteratively move generated samples towards the target data distribution during training. Despite these practical strengths, the underlying theoretical mechanics of GMD remain relatively unexplored. In this note, we discuss the relation between drifting models and Wasserstein Gradient Flows (WGF).  

We begin in \Cref{sec:GMD} with a short review of the Generative Modeling via Drifting (GMD) framework proposed by \citeauthor{deng2026drifting}, and then cover the basics of Wasserstein Gradient Flows (WGF) \citep{ambrosio2008gradient} in \Cref{subsec:WGF}. Next, in \Cref{sec:fixedpoint}, we illustrate a re-interpretation of one of the simpler GMD instantiations as a technique to find an approximation to a fixed point of a WGF for the Kullback--Leibler divergence. 

The algorithm  implemented in practice by \citeauthor{deng2026drifting} (Algorithm 2 in their paper) is more complex, however. In Section~\ref{sec:sinkhorn}, we analyse this algorithm (with minor variations), demonstrating a number of results. First,  we show that this model bears a resemblance to a drifting model on the Wasserstein gradient flow on the Sinkhorn divergence, and shares a specific property of this flow; thus, we refer to it as the Sinkhorn Proxy.  Second, we prove that the velocity field of the Sinkhorn Proxy is zero if and only if $p$ and $q$ agree when the Gaussian kernel is used, under additional finite-measure assumptions.  In addition to these positive results, we also present two negative results for the Gaussian kernel case: first, that the velocity field  does not in fact correspond to the Wasserstein gradient of {\em any} scalar distributional loss (unless a highly implausible condition holds); second, that convergence of the Sinkhorn Proxy does not inherit the good convergence behaviour one would expect of gradient flow on a true optimal transport distance, in the event that source and target distributions have distant modes with non-overlapping support.   

We next explore a broader perspective on drifting in Section \ref{sec:extensions}, beginning with a general-purpose formulation of a drifting algorithm. We then consider instantiations for specific measures of divergence.  \citet[][Appendix C.2]{deng2026drifting} discuss  drifting on the maximum mean discrepancy (MMD), which is very close in practice to their drifting algorithm, and represents the fixed point of the Wasserstein gradient flow on the MMD \citep{arbel2019mmdflow}. We further introduce drifting on the Sliced-Wasserstein flow \citep{cozzi2024swflow},  and on the class of GAN critic functions derived from f-divergences \citep{nowozin2016fgan}, providing among other results an alternative  to the  KL-drifting method of Section \ref{sec:fixedpoint}. 

{\bf Related work.} The interpretation of the drifting algorithm as the stationary point of a Wasserstein gradient flow was established by \citet{cao2026gradientflowdrifting,turan2026generativedriftingsecretlyscore,he2026sinkhorndrifting}.  \citet{cao2026gradientflowdrifting} describes a connection of drifting to the fixed point of a Wasserstein gradient flow on the KL between Parzen estimates of the densities, although there is a subtle difference with our presentation - we provide more detail in Section \ref{sec:fixedpoint}.   They further note the interpretation of MMD drifting as the stationary point of the MMD gradient flow.  \citet{turan2026generativedriftingsecretlyscore} and \citet{lai2026unifiedviewdriftingscorebased}  show that for the Gaussian kernel, the velocity field of the initial drifting instantiation is expressed as a score difference, which is zero only when the distributions agree.  \citet{he2026sinkhorndrifting} and \citet[][Section 6.3, Appendix I.4]{turan2026generativedriftingsecretlyscore} each describe a drifting algorithm built on the Wasserstein gradient flow of the fully converged Sinkhorn divergence.
Alternative perspectives on drifting models, complementary to the one presented here, are proposed by \citet{lai2026unifiedviewdriftingscorebased} and \citet{li2026long}.  
 It is notable that the WGF interpretation of the drifting
model actually implemented by
 \citet[][Algorithm 2]{deng2026drifting} is not covered in these prior works.

\section{A Wasserstein Gradient Flow Perspective} \label{sec:perspective}
\subsection{Generative Modeling via Drifting models}\label{sec:GMD}
Let $p$ be the target data distribution on $\R^d$. To approximate $p$, we use a neural network $f_\theta:\R^c \to \R^d$ whose input is some noise $\epsilon \sim \mu$ (e.g., $\epsilon \sim \mathcal{N}(0,I_c)$) and whose output is $x=f_\theta(\epsilon)$. The implicit distribution of $x$ is denoted $q_\theta$. We assume that $f_\theta(\epsilon)$ is differentiable w.r.t. $\theta$ for all $\epsilon \in \R^c$. 

The drifting-model paradigm evolves $\theta$ during training for $q_\theta$ to match the target $p$. This method relies on a drift field $V_{p,q_\theta}: \R^d \to  \R^d$ which we use to displace samples from our current model. We will discuss several instances of drifting fields in subsequent sections.

We  update parameters $\theta$ by minimizing the regression loss
\begin{equation}\label{eq:dt-loss}
\mathcal{L}(\theta)=\E_{\epsilon\sim \mu} \Big[\|f_{\theta}(\epsilon)-\sg\!\big(f_{\theta}(\epsilon) + \widehat{V}_{p,q_{\theta}}(f_{\theta}(\epsilon))\big)\|^2\Big];
\end{equation}
i.e. the average mean square error between a sample and its displaced version; see \cite[Equation (6)]{deng2026drifting}. Here $\widehat{V}_{p,q_{\theta}}$ is an approximation to $V_{p,q_{\theta}}$.

The stop-gradient $\sg(\cdot)$ operator appearing in \eqref{eq:dt-loss} is essential: it ensures that optimization moves the generator \emph{toward} the prescribed transport step,
rather than changing the drift estimator itself \citep{deng2026drifting}. Crucially, it also does not require back-propagating through $\widehat{V}_{p,q_\theta}$. Additionally, it is expected that this approach is more robust than minimizing $\mathbb{E}_{\epsilon}[\|\widehat{V}_{p,q_\theta}(f_\theta(\epsilon))\|^2]$ where the generator can reduce the loss by collapsing mass so attraction and repulsion cancel, spreading mass to make kernel weights flat, or shaping $q_\theta$ so that $\widehat{V}_{p,q_\theta}$ becomes small without matching $p$. In Section~\ref{sec:algo}, Proposition~\ref{prop:loss_gradient}, we show that the gradient of this loss turns out to be correct under the WGF perspective presented next, and the stop gradient operator is crucial.  

\subsection{Wasserstein Gradient Flows}\label{subsec:WGF}
We will  sketch here the basics of WGFs. We refer the reader to \citep{ambrosio2008gradient,santambrogio2016overview} for a comprehensive treatment and to Appendix A in \citep{arbel2019mmdflow} for a gentle introduction\footnote{Throughout the  derivation of WGF for KL below, we assume that the distributions under discussion admit positive $C^1$ densities with sufficient decay for the integrations by parts and score fields to be valid. For parametric generators this can be ensured, for example, by taking $c=d$ and assuming $f_\theta$ is a sufficiently regular diffeomorphism, or by adding noise. If $c<d$ or $f_\theta$ is not regular enough, $q_\theta$ may be singular with respect to Lebesgue measure; then the density-level KL derivations should be interpreted formally or after smoothing. The MMD and Sinkhorn statements below are stated at the level of measures whenever possible.}

Let $\cP_2(\R^d)$ be the space of distributions with finite second-order moments on $\R^d$.
Let $\cF_p:\cP_2(\R^d)\to\R$ be an energy functional minimized at $q=p$; e.g. the Kullback--Leibler divergence $\cF_p(q)=\textup{KL}(q\|p)$ or the squared Maximum Mean Discrepancy $\cF_p(q)=\mathrm{MMD}_{k}^{2}(p,q)$ for kernel $k$ (see \cref{sec:otherDivergenceMeasures}).
Under standard regularity assumptions, the Wasserstein-2 ($W_2$) gradient flow of $\cF_p$ is characterized by the partial differential equation (PDE) \citep{ambrosio2008gradient,santambrogio2016overview}
\begin{equation}\label{eq:w2-gf}
\partial_t q_t (x)=\; \n\cdot\left(q_t(x) \n \frac{\delta \cF_p}{\delta q_t}(x)\right),
\end{equation}
where  $\frac{\delta \cF_p}{\delta q}:\R^d\to\R $ is the first variation of the functional $\cF_p$ with respect to the density $q$ and $\nabla \cdot$ is the divergence operator. If it exists, this first variation is defined as the unique function (up to an additive constant) that satisfies the following linear expansion for any perturbation $\chi$ in the tangent space (i.e. $\int \chi(x)\dd x=0, \int \|x\|^2 |\chi(x)|\dd x< \infty$), where $h \in \R$
\begin{equation}
\left. \frac{\dd}{\dd h} \right|_{h=0} \cF_p(q+ h \chi) = \int \frac{\delta \cF_p}{\delta q}(x)\,\chi(x)  \dd x.
\end{equation}
The PDE \eqref{eq:w2-gf} implies that $\cF_p(q_t)$ is non-increasing over time as, under regularity conditions, we have
\begin{align}
\frac{\dd}{\dd t} \cF_p(q_t) &= \int \frac{\delta \cF_p}{\delta q_t}(x)\,\partial_t q_t(x) \, \dd x\\
&=\int \frac{\delta \cF_p}{\delta q_t}(x)\, \nabla \cdot \left( q_t(x) \nabla \frac{\delta \cF_p}{\delta q_t}(x) \right) \dd x\\
&=- \int q_t(x) \left\| \nabla \frac{\delta \cF_p}{\delta q_t}(x) \right\|^2 \dd x \leq 0,
\end{align}
where we have used first the chain rule, then \eqref{eq:w2-gf} and finally the divergence theorem.

The PDE \eqref{eq:w2-gf} can be  rewritten as a continuity equation
\begin{equation}\label{eq:continuity}
\partial_t q_t(x) + \n\cdot(q_t(x) V_{p,q_t}(x))=0,
\qquad
V_{p,q_t}(x)= - \n \frac{\delta \cF_p}{\delta q_t}(x).
\end{equation}
See Figure~\ref{fig:wgf_example} for an illustration of relevant quantities using the KL divergence as $\mathcal{F}_p$. However, note that this is not a standard continuity equation as the drift not only depends on $x$ but also on the distribution $q_t$ of $x_t$ at time $t$.  
Formally, this continuity equation corresponds to the
following McKean--Vlasov process for individual samples,
\begin{equation}\label{eq:particle-ode}
\dd x_t = V_{p,q_t}(x_t) \dd t,\qquad x_0 \sim q_0;
\end{equation}
i.e., this ODE is such that $x_t \sim q_t$. In order to implement  such an ODE, we need to discretize it and approximate its drift.
For example, using an Euler method leads to
\begin{equation}\label{eq:EulerODE}
x_{t+\eta}=x_t+\eta \widehat{V}_{p,q_t}(x_t),
\end{equation}
where $\eta > 0$ is a stepsize, and we use $\widehat{V}_{p,q_t}$ estimated from finite samples, which depends on the energy functional being minimized. 

\subsection{Drifting Models as Fixed Point of a Wasserstein Gradient Flow on the KL}\label{sec:fixedpoint}
We provide two alternative views of Algorithm~1 in \citet{deng2026drifting} through the score function of distributions. We also clarify the distinction between the pointwise Parzen-smoothed score-difference field and the true Wasserstein gradient flow of $\mathrm{KL}(q_\tau\|p_\tau)$. The Sinkhorn Proxy method, which better resembles \citeauthor{deng2026drifting}'s Algorithm~2 (main method), is introduced in Section~\ref{sec:sinkhorn}. We then  show common failure modes of the Sinkhorn Proxy and score objective.

\begin{figure}
\centering
   \includegraphics[width=0.7\textwidth]{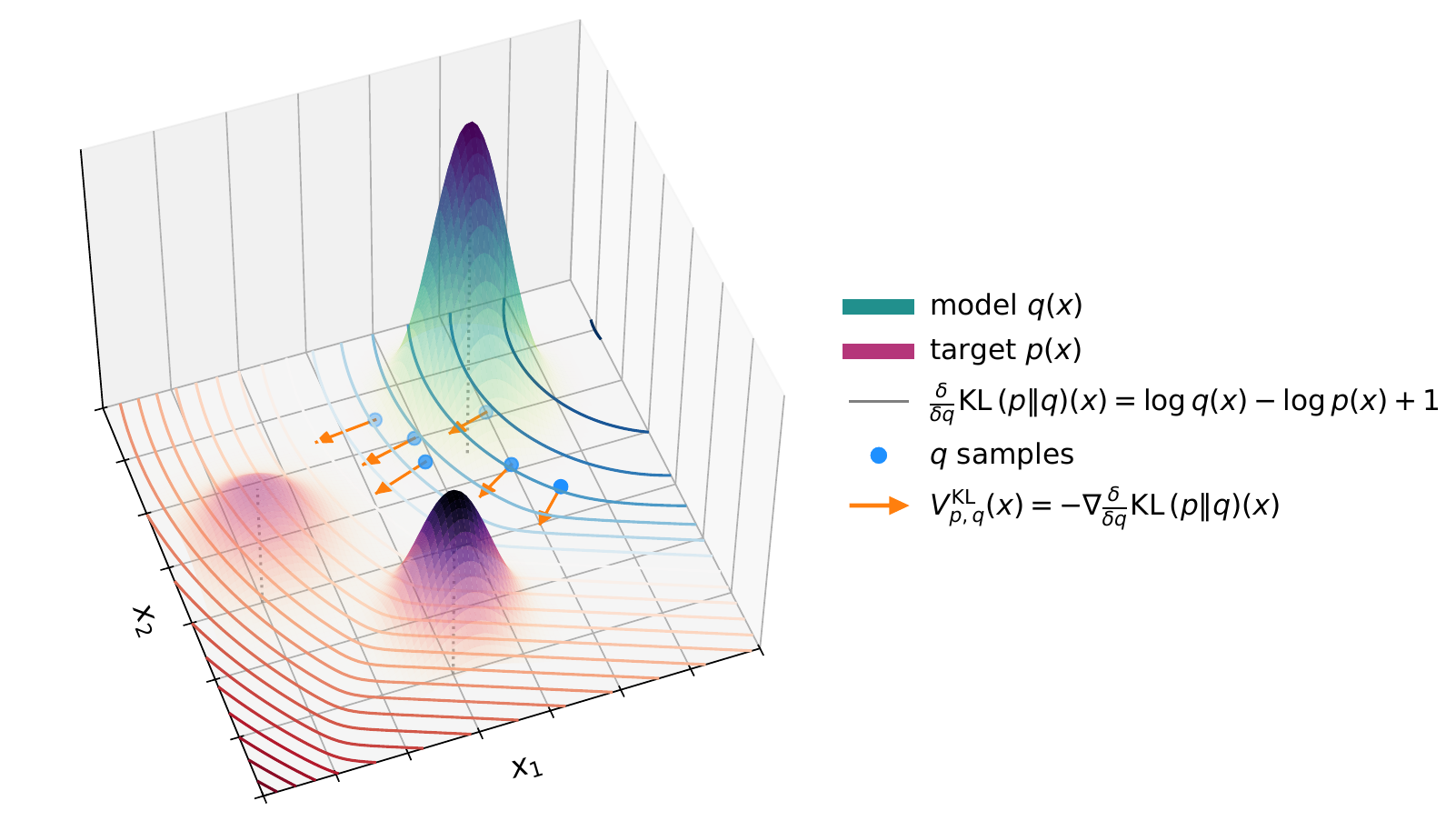}
 \caption{An example of the WGF associated with $\mathcal{F}_p(q)=\mathrm{KL}(q\|p)$, where $p$ is the target distribution and $q$ the model distribution. The contour shows the first variation $\delta\mathrm{KL}(q\|p)/\delta q=\log q-\log p+1$, where blue is positive and pink is negative. The arrows show the WGF velocity vectors evaluated at samples from $q$, namely $V^{\mathrm{KL}}_{p,q}=\nabla\log p-\nabla\log q$. In practice, the flow vectors are estimated from samples from unknown $p$ and $q$.
 \label{fig:wgf_example}
 }
\end{figure}
\subsubsection{A score-difference drift field}\label{sec:kl_drift}
For probability densities $p,q$ on $\R^d$, we first consider the following drift field defined by \citet[eq. (8)]{deng2026drifting}:
\begin{equation}\label{eq:driftingfield}
V_{p,q}(x)=V_p(x)-V_q(x)
\end{equation}
with
\begin{equation}\label{eq:meanshift}
V_p(x)=\frac{\E_{y\sim p}\big[k_{\tau}(x,y)\,y\big]}{\E_{y\sim p}\big[k_{\tau}(x,y)\big]}-x,\qquad V_q(x)=\frac{\E_{x'\sim q}\big[k_{\tau}(x,x')\,x'\big]}{\E_{x'\sim q}\big[k_{\tau}(x,x')\big]}-x
\end{equation}
where $k:\R^d\times\R^d\to(0,\infty)$ is a kernel.\footnote{We emphasize that this is not the algorithm \citeauthor{deng2026drifting} use in practice, which has a different interpretation, and will be discussed in Section \ref{sec:sinkhorn}.} This defines a \emph{mean-shift} style drifting field \eqref{eq:driftingfield} built from an ``attraction'' term
(from positives $y\sim p$) and a ``repulsion'' term (from negatives $x'\sim q$). 
In particular, for the Gaussian kernel in \cite[Section 3.3]{deng2026drifting} defined as
\begin{equation}\label{eq:rbf}
k_\tau(x,y)\;=(\pi \tau)^{-d/2}\;\exp\!\Big(-\tau^{-1}\|x-y\|^2\Big),
\qquad \tau>0,
\end{equation}
we have for $p_\tau(x)\triangleq \E_{y \sim p}[k_\tau(x,y)]$ and $q_\tau(x)\triangleq \E_{x' \sim q}[k_\tau(x,x')]$ that
\begin{align}\label{eq:Tweedie}
\n \log p_\tau(x)
&=\frac{\n p_\tau(x)}{p_\tau(x)}
=\frac{2}{\tau}\frac{\E_{y \sim p}\big[(y-x)k_\tau(x,y)\big]}{\E_{y \sim p}\big[k_\tau(x,y)\big]}
=\frac{2}{\tau}V_p(x)
\end{align}
and similarly $\n \log q_\tau(x)=\frac{2}{\tau}V_q(x)$.  Note that \eqref{eq:Tweedie} is an instance of Tweedie's formula.
Therefore we have for \eqref{eq:rbf} that 
\begin{equation}\label{eq:v-score-diff}
V_{p,q}(x)
\;=\;
\frac{\tau}{2}\Big(\n\log p_\tau(x)-\n\log q_\tau(x)\Big);
\end{equation}
that is, up to a constant factor, the drifting field is the difference of the scores of noised versions of $p$ and $q$. This velocity field and its relation to the KL flow was proposed by
\citet[Section 3]{weber2023score}, and  later by
\citet[][Theorem 1]{lai2026unifiedviewdriftingscorebased} and \citet[][Theorem 4.1]{turan2026generativedriftingsecretlyscore}, where the latter also prove that for the Gaussian kernel, $V_{p,q}=0$ iff $p=q$ (their Theorem 5.1). \citet{franz2026drifting} show that when a non-Gaussian kernel is used, the  velocity field does not correspond to a Wasserstein gradient, and propose an alternative normalization which restores conservatism. 

Practically, we are interested in $V_{p,q_\theta}(x)=V_p(x)-V_{q_\theta}(x)$. Here $V_p$ and $V_{q_\theta}$ are approximated by sampling mini-batches $\{y_j\}_{j=1}^{N^+} \iidsim p$ and $\{x_i\}_{i=1}^{N^-} \iidsim q_\theta$. We define normalized weights:
\begin{equation}\label{eq:orig-weights}
\alpha_j^+(x)=\frac{k_{\tau}(x,y_j)}{\sum_{\ell=1}^{N^+}k_{\tau}(x,y_{\ell})},
\qquad
\alpha_i^-(x)=\frac{k_{\tau}(x,x_i)}{\sum_{\ell=1}^{N^-}k_{\tau}(x,x_{\ell})}.
\end{equation}
Then, from \eqref{eq:meanshift}, we have the drift estimate
\begin{equation}\label{eq:orig-means}
\widehat{V}_{p,q_\theta}(x)=\widehat{V}_p(x)-\widehat{V}_{q_\theta}(x)
~~\text{with}~~\widehat{V}_p(x)=\sum_{j=1}^{N^+}\alpha_j^{+}(x)(y_j-x),
\quad
\widehat{V}_{q_\theta}(x)=\sum_{i=1}^{N^-}\alpha_i^{-}(x)(x_i-x).
\end{equation}

\subsubsection{Wasserstein Gradient Flow for Kullback--Leibler (KL) divergence}\label{sec:kl_flow}
Consider the WGF associated to the Kullback--Leibler discrepancy $\cF_p(q)=\textup{KL}(q||p)$.
In this case, it can be easily checked that 
\begin{equation}\label{eq:firstvariationKL}
\frac{\delta \cF_p}{\delta q}(x)= \log q(x) - \log p(x) + 1
\end{equation}
so the ODE \eqref{eq:particle-ode} becomes
\begin{equation}\label{eq:LangevinODE}
\dd x_t=(\nabla \log p(x_t)-\nabla \log q_t(x_t))\dd t,\quad x_0 \sim q_0
\end{equation}
i.e. the drift of \eqref{eq:LangevinODE} is given by the difference of the (intractable) scores of both the data distribution $p$ and of $q_t$.\footnote{This ODE is the probability flow version of the following Langevin diffusion
\begin{equation}\label{eq:LangevinSDE}
\dd x_t=\nabla \log p(x_t)\dd t+\sqrt{2}\dd B_t,\quad x_0 \sim q_0
\end{equation}
i.e., the processes defined in \eqref{eq:LangevinODE} and \eqref{eq:LangevinSDE} have the same marginal distributions.} %

An Euler discretization (i.e. \eqref{eq:EulerODE}) of \eqref{eq:LangevinODE} with step size $\eta=\tau/2$ is given by 
\begin{equation}\label{eq:EulerGDM}
x_{t+\frac{\tau}{2}}=x_t+\frac{\tau}{2} (\nabla \log p(x_t)-\nabla \log q_t(x_t)),\qquad x_0 \sim q_0.
\end{equation}
Now, in particular, this shows that for $x_t \sim q_t=p$, the particle displacement over the time interval of size $\frac{\tau}{2}$ is given by $\frac{\tau}{2}(\nabla \log p(x_t)-\nabla \log q_t(x_t))$, which evaluates exactly to zero. The key point of GMD is that it is not trying to simulate the flow $(x_t)_{t\geq 0}$. It just directly attempts to find its limiting point by setting the following fixed-point equation. For $x_\infty \sim q=p$, the displacement is  zero, and we have from \eqref{eq:EulerGDM} that
\begin{equation}\label{eq:fixedpoint}
x_\infty =x_\infty+\frac{\tau}{2}(\nabla \log p(x_\infty)-\nabla \log q(x_\infty)),
\end{equation}
where both $\nabla \log p$ and $\nabla \log q$ are intractable. By making the approximations $\nabla \log p(x) \approx \nabla \log p_\tau(x)$ and $\nabla \log q(x) \approx \nabla \log q_\tau(x)$, we thus get the approximation of fixed-point equation \eqref{eq:fixedpoint}
\begin{equation}\label{eq:approxfixedpoint}
x_\infty =x_\infty+V_{p,q}(x_\infty),
\end{equation}
where $V_{p,q}(x)$ is defined in \eqref{eq:v-score-diff}.

Going back to the parametric framework, this suggests identifying $\theta$ as solving the fixed-point equation \eqref{eq:approxfixedpoint} using
\begin{equation}
f_\theta(\epsilon) =f_\theta(\epsilon)+V_{p,q_\theta}(f_\theta(\epsilon)),\qquad \epsilon \sim \mu;
\end{equation}
see \cite[Equation (4)]{deng2026drifting}. This is achieved by performing a Monte Carlo approximation $\widehat{V}_{p,q_\theta}$ in \eqref{eq:orig-means} of $V_{p,q_\theta}$ and minimizing \eqref{eq:dt-loss}. 
So, to summarize, drifting models for the drift field defined by \eqref{eq:driftingfield} and \eqref{eq:meanshift} can be re-interpreted as a method for approximating the fixed point of the KL WGF after the intractable scores $\nabla\log p$ and $\nabla\log q$ are replaced by the Parzen-smoothed score estimates $\nabla\log p_\tau$ and $\nabla\log q_\tau$. This should not be confused with the WGF of $\mathrm{KL}(q_\tau\|p_\tau)$, however, whose velocity is given by \eqref{eq:trueGradientFlowParzen} below.

We now review the related finding of \citet[][Corollary 4.10]{cao2026gradientflowdrifting}, which states that the drifting model's velocity field is the Wasserstein-2 gradient flow of the KL divergence between smoothed densities, $\cF^{(\tau)}_p(q):=\textup{KL}(q_\tau || p_\tau)$.    \citet[][eq. 7]{cao2026gradientflowdrifting} claim that \eqref{eq:v-score-diff} represents this velocity field. 
The correct expression for the velocity field of this KL divergence is not \eqref{eq:v-score-diff}, however, but
\begin{equation}\label{eq:trueGradientFlowParzen}
    V^{(\tau)}_{p,q}(y) = -\nabla_y \int k_{\tau}(x,y) \log \frac{q_\tau(x)}{p_\tau(x)} \dd x.
\end{equation}
The proof is in Appendix~\ref{sec:parzen_smoothed_kl}.  More precisely, if the kernel $k_\tau(x,y)= \bar{k}_\tau(x-y)$ for some function $\bar{k}$,  then we have the following result using a change of variables:
\begin{equation}
\label{eq:trueGradientFlowParzenmod}
    V^{(\tau)}_{p,q}(y) = \int k_\tau(x,y) \nabla \log p_\tau(x) \rmd x - \int k_\tau(x,y) \nabla \log q_\tau(x) \rmd x \neq \frac{\tau}{2}\left( \nabla \log p_\tau(y) - \nabla \log q_\tau(y)\right). 
\end{equation}

The same issue applies for the additional flows on Parzen window $f$-divergences defined in that work. As a downstream consequence, the dissipation result of \citet[][Corollary 4.8 (ii)]{cao2026gradientflowdrifting} is not established, since particles following \eqref{eq:v-score-diff}  are not following the velocity field for $\cF^{(\tau)}_p(q)$.

\section{A Sinkhorn Proxy for Drifting Models}\label{sec:sinkhorn}

The score-difference drift of \Cref{sec:fixedpoint} approximates scores via kernel density estimates. The algorithm used in practice by \citet{deng2026drifting} follows a different procedure, however (Algorithm~2 in their paper). We develop a closely related drift field---which we term the \emph{Sinkhorn Proxy}---by connecting it to the Wasserstein gradient flow on the \emph{debiased Sinkhorn divergence} \citep{Ramdas17wassersteinTest,feydy2019sinkhorn}:
\begin{equation}\label{eq:sinkhorn_div}
    S_\tau(q, p) = \textup{OT}_\tau(q, p) - \tfrac{1}{2}\,\textup{OT}_\tau(q, q) - \tfrac{1}{2}\,\textup{OT}_\tau(p, p),
\end{equation}
where $\textup{OT}_\tau$ denotes entropy-regularized optimal transport as defined by \citet{cuturi2013sinkhorn}, with regularization $\tau > 0$. This section is a summary of a more in-depth exposition in Appendices~\ref{sec:sinkhornProxyDetails}-\ref{sec:failureSyntheticProofs}.

Consider a set of $N^-\in\mathbb{N}_+$  samples $\{x_i\}_{i=1}^{N^-}$  drawn from the model,
and a set of $N^+\in\mathbb{N}_+$ data samples $\{y_j\}_{j=1}^{N^+}$. The WGF associated with $S_\tau$ can be estimated as follows.

\begin{restatable}[Exact Sinkhorn particle velocity]{prop}{propSinkhornVelocity}\label{prop:sinkhorn_velocity}
Let 
$q = \frac{1}{N^-}\sum_{i=1}^{N^-} \delta_{x_i}$
and
$p = \frac{1}{N^+}\sum_{j=1}^{N^+} \delta_{y_j}$
be empirical measures comprising samples from model and the data set, respectively. Let $W^+ \in \R_+^{N^-\times N^+}$ and $W^- \in \R_+^{N^-\times N^-}$ be the row-stochastic conditional optimal transport plans for $\textup{OT}_\tau(q,p)$ and $\textup{OT}_\tau(q,q)$, respectively, under the transport cost $C(x,y) = \frac{1}{2} \|x-y\|^2$. Then the Wasserstein gradient flow velocity of the debiased Sinkhorn divergence $S_\tau(q,p)$ at location $x_i$ is
\begin{equation}\label{eq:sinkhorn_exact}
    \widehat{V}^\textup{S}_{p,q}(x_i) = \sum_{j=1}^{N^+} W^+_{ij}\, y_j - \sum_{k=1}^{N^-} W^-_{ik}\, x_k.
\end{equation}
\end{restatable}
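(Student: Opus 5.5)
Since the support of the model measure $q$ is what moves under the flow, I will compute the first variation of $S_\tau(\cdot,p)$ with respect to $q$ through the dual (Schrödinger potential) formulation of entropic OT. I then take its spatial gradient at the particles $x_i$. For $\textup{OT}_\tau(\alpha,\beta)$ with cost $C(x,y)=\tfrac12\|x-y\|^2$, take the optimal dual potentials $(f,g)$ in their smooth extension to all of $\R^d$, i.e. the soft-$c$-transform
\begin{equation}
f(x) = -\tau \log \int \exp\!\big((g(y)-C(x,y))/\tau\big)\,\dd\beta(y).
\end{equation}
It is standard (and I would cite \citet{feydy2019sinkhorn}) that $\textup{OT}_\tau$ is differentiable in $\alpha$ with first variation $f$, up to additive constants.

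For the symmetric term $\textup{OT}_\tau(q,q)$, the perturbation enters both marginals. By symmetry the two potentials coincide, say equal to $s$, so the first variation of $\tfrac12\textup{OT}_\tau(q,q)$ is $\tfrac12\cdot 2 s = s$. The last term does not depend on $q$. Hence $\frac{\delta S_\tau}{\delta q} = f_{qp} - s_{qq}$, and by \eqref{eq:continuity} the velocity is $V = -\nabla f_{qp} + \nabla s_{qq}$.

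The key computation is the gradient of the soft-$c$-transform. Differentiating under the (here finite) sum gives
\begin{equation}
\nabla f(x) = \sum_j \pi_j(x)\,\nabla_x C(x,y_j) = x - \sum_j \pi_j(x)\, y_j,
\end{equation}
with Gibbs weights $\pi_j(x)\propto \beta_j \exp((g(y_j)-C(x,y_j))/\tau)$. I then need to check that at $x=x_i$ these weights equal the row-normalized optimal plan. The primal–dual relation is $P_{ij} = \alpha_i\beta_j\exp((f(x_i)+g(y_j)-C(x_i,y_j))/\tau)$. Dividing by the row marginal $\alpha_i = \sum_j P_{ij}$, which holds at optimality, gives exactly $\pi_j(x_i)$, i.e. $W^+_{ij}$. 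So $-\nabla f_{qp}(x_i) = \sum_j W^+_{ij} y_j - x_i$. The same argument for $\textup{OT}_\tau(q,q)$ gives $\nabla s_{qq}(x_i) = x_i - \sum_k W^-_{ik} x_k$. Adding these, the $x_i$ terms cancel and yield \eqref{eq:sinkhorn_exact}.

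The main obstacle is justifying the first-variation step on empirical measures. Perturbations of a Dirac mixture are not absolutely continuous, so the density-level definition above must be read at the level of measures. Concretely, I would verify directly that moving particle $x_i$ changes $S_\tau$ at rate $\frac{1}{N^-}\nabla(f_{qp}-s_{qq})(x_i)$. I would do this via an envelope theorem on the dual objective: the potentials are optimal, so their implicit dependence on $x_i$ drops out, and only the explicit cost dependence survives. For the $\textup{OT}_\tau(q,q)$ term, $x_i$ appears in both marginals. The envelope argument then produces two equal contributions, and these are exactly compensated by the factor $\tfrac12$, which is the delicate bookkeeping step I would check carefully.
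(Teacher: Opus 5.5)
Your proposal is correct and follows essentially the same route as the paper: you obtain the first variation $f_{qp}-s_{qq}$ from the envelope result of \citet{feydy2019sinkhorn}, with the factor $\tfrac12$ absorbing the two symmetric contributions of the self-term, then differentiate the soft-$c$-transform and identify the Gibbs weights at $x_i$ with the row-normalized optimal plan, so the $x_i$ terms cancel. Your extra particle-level envelope check, giving $\nabla_{x_i}S_\tau = \frac{1}{N^-}\nabla(f_{qp}-s_{qq})(x_i)$, is a rigor step the paper leaves implicit, but it does not change the argument.
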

\noindent For reference and to make the paper self-contained, a proof is given in Appendix~\ref{sec:sinkhornProxyDetailedIntro}, using  results from \citep{feydy2019sinkhorn}. The flow velocity depends on $x_i$  through $W^+$ and $W^-$ (and the self-term in the second sum). \cite{he2026sinkhorndrifting} and \citet[][Section 6.3, Appendix I.4]{turan2026generativedriftingsecretlyscore}  propose a drifting algorithm built on the fully 
converged Sinkhorn solution.

\subsection{The empirical Sinkhorn proxy}
Computing \eqref{eq:sinkhorn_exact} requires running the Sinkhorn algorithm to convergence at each training step.
To follow \citeauthor{deng2026drifting}'s Algorithm 2,
we instead replace this with a one-shot approximation. Let $z^+_{im} = -\|x_i - y_m\|^2/\tau$ be the scaled affinities between the $i$'th model sample and $m$'th data point, and $z^-_{im} = -\|x_i - x_m\|^2/\tau$ be the scaled affinities within two samples. We define the pseudo-plan
\begin{equation}\label{eq:one_shot}
 \begin{aligned}
    {A}^+_{im} &= \sqrt{\frac{\exp(z^+_{im})}{(N^+)^{-1}\sum_{u=1}^{N^+} \exp(z^+_{iu})} \cdot \frac{\exp(z^+_{im})}{(N^-)^{-1}\sum_{v=1}^{N^-} \exp(z^+_{vm})}}\;,
    \\
    {A}^-_{im} &= \sqrt{\frac{\exp(z^-_{im})}{(N^-)^{-1}\sum_{u=1}^{N^-} \exp(z^-_{iu})} \cdot \frac{\exp(z^-_{im})}{(N^-)^{-1}\sum_{v=1}^{N^-} \exp(z^-_{vm})}}\;,
    \end{aligned}
\end{equation}
i.e., the geometric mean of the row-wise and column-wise softmax normalizations, with additional $N^+$ and $N^-$ scaling for well-defined population limits.    This amounts to composing a {\em simultaneous Sinkhorn/Ruiz half-iteration} from each side (Appendix~\ref{sec:whyCloseToSinkhorn}) - when $N^+=N^-$ and omitting the $N^+ N^-$ scaling,  repeated iteration of this procedure on $A^+$  converges to the entropy-regularised transport map  \citep[][Theorem 3.1]{knight2014symmetry}.
In a related finding, \citet[Proposition 3.3]{he2026sinkhorndrifting} show that the simpler drifting method described in \cref{sec:fixedpoint}
 corresponds to a gradient flow on the first step of the standard, alternating Sinkhorn-Knopp algorithm.

\paragraph{Cross-weighting.}
Unlike the fully converged solution, both ${A}^-$ and ${A}^+$ no longer have unit row or column sums.
\citeauthor{deng2026drifting} address the issue via a cross-weighting step, which we describe below. Let $s^+_i = (N^+)^{-1} \sum_j {A}^+_{ij}$ and $s^-_i = (N^-)^{-1} \sum_k {A}^-_{ik}$ be the respective normalized row sums, and set
\begin{equation}\label{eq:cross_weight}
    \widetilde{W}^+_{ij} = \frac{1}{N^+} {A}^+_{ij}\cdot s^-_i,
    \qquad
    \widetilde{W}^-_{ik} =  \frac{1}{N^-}  {A}^-_{ik}\cdot s^+_i.
\end{equation}
Writing $\tilde{\pi}^+_{ij} = (N^+)^{-1}  {A}^+_{ij}/s^+_i$,  $\tilde{\pi}^-_{ik} = (N^-)^{-1} {A}^-_{ik}/s^-_i$, and $Z_i=s_i^+ s_i^-$,  the empirical Sinkhorn proxy drift at $x_i$ becomes
\begin{equation}\label{eq:proxy_drift}
    \widehat{V}^\textup{SP}_{p,q}(x_i) = Z_i \left(\sum_{j=1}^{N^+} \tilde{\pi}^+_{ij}\, y_j - \sum_{k=1}^{N^-} \tilde{\pi}^-_{ik}\, x_k\right).
\end{equation}
which shares with the true Sinkhorn flow \eqref{eq:sinkhorn_exact} the property that the rows of $\tilde{\pi}^+_{ij}$ and $\tilde{\pi}^-_{ik}$ sum to 1, and that the $x_i$ cancels across both terms.

This drift for a parametric $q_\theta$ is then used to optimize the $\theta$, similar to the method in \cref{sec:kl_flow} for the KL drift in \cref{sec:kl_drift}.
The complete procedure for computing the drift is summarized in Algorithm~\ref{alg:sinkhorn_proxy}, where we  omit the $N^-$ and $N^+$ terms, which simply result in a common batch-size-dependent scaling factor on the step size $\eta$.\footnote{ We  do not address here the \texttt{ignore self} step of \citet[][Algorithm 2]{deng2026drifting}, where self-interactions are removed, which is important in the implementation.} As indicated in the algorithm, the Sinkhorn Proxy differs from the practical algorithm of \citet{deng2026drifting} in the row-normalization scope: we normalize ${A}^+$ and ${A}^-$ over $N^+$ and $N^-$ columns respectively, whereas \citeauthor{deng2026drifting} first concatenate positive and negative logits and normalize jointly over $N^+ + N^-$ columns.  Our formulation more closely matches the separate marginal structures of $\mathrm{OT}_\tau(q,p)$ and $\mathrm{OT}_\tau(q,q)$ (Appendix~\ref{sec:oneShot}).

\begin{algorithm}[ht]
\caption{Computing the Sinkhorn Proxy drift field $\widehat{V}$}
\label{alg:sinkhorn_proxy}
\begin{algorithmic}[1]
\Require Model samples $\{x_i\}_{i=1}^{N^-}$, target samples $\{y_j\}_{j=1}^{N^+}$, 
temperature ~$\tau$.
\State $z^+_{ij} = -\|x_i - y_j\|^2/\tau$, \;\; $z^-_{ik} = -\|x_i - x_k\|^2/\tau$ \Comment{compute affinities}
\State Iterations are executed for $j=1,\ldots,N^{+}$, $k=1,\ldots,N^{-}$ and $m =1,\ldots,N^{+}+N^{-}$ ranges.
\end{algorithmic}
\begin{minipage}[t]{0.48\linewidth}
\centering\textsc{Sinkhorn Proxy (ours)}\\[4pt]
\raggedright
\begin{algorithmic}[1]
\setcounter{ALG@line}{1}
  \For{$(\diamond, \ell) \in \{(+, j), (-,k)\}$}
    \State $\alpha^{\textup{row},\diamond}_{i \ell} = \frac{\exp(z^\diamond_{i \ell})}{\sum_{u=1}^{N^\diamond} \exp(z^\diamond_{iu})}$
    \State $\alpha^{\textup{col},\diamond}_{i \ell} = \frac{\exp(z^\diamond_{i \ell})}{\sum_{v=1}^{N^-} \exp(z^\diamond_{v \ell})}$
    \State ${A}^\diamond_{i \ell} = \sqrt{\alpha^{\textup{row},\diamond}_{i \ell} \cdot \alpha^{\textup{col},\diamond}_{i \ell}}$
  \EndFor
\end{algorithmic}
\end{minipage}%
\hfill
\vrule
\hfill
\begin{minipage}[t]{0.48\linewidth}
\centering\textsc{\citeauthor{deng2026drifting}, Algorithm~2 (DA2)}\\[4pt]
\raggedright
\begin{algorithmic}[1]
\setcounter{ALG@line}{1}
  \State $z = [z^+, z^-] \in \R^{N^- \times (N^+ + N^-)}$ \Comment{row concat}
  \State $\alpha^{\textup{row}}_{im} = \frac{\exp(z_{im})}{\sum_{u=1}^{N^+ + N^-} \exp(z_{iu})}$
  \State $\alpha^{\textup{col}}_{im} = \frac{\exp(z_{im})}{\sum_{v=1}^{N^-} \exp(z_{vm})}$
  \State ${A}_{im}\!=\!\sqrt{\alpha^{\textup{row}}_{im} \!\cdot\! \alpha^{\textup{col}}_{im}}$;\;
  \State $[{A}^+\!, {A}^-] = \operatorname{split\_column}({A}, [N^+, N^-])$ \hfill 
\end{algorithmic}
\end{minipage}
\begin{algorithmic}[1]
\setcounter{ALG@line}{6}
\State $s^+_i = {\sum_{j=1}^{N^+}} {A}^+_{ij}$, \;\; $s^-_i = {\sum_{k=1}^{N^-}} {A}^-_{ik}$
\State $\widetilde{W}^+_{ij} = {A}^+_{ij} \cdot s^-_i$, \;\; $\widetilde{W}^-_{ik} = {A}^-_{ik} \cdot s^+_i$ \Comment{cross-weighting}
\State \Return $\widehat{V}(x_i) = {\sum_{j=1}^{N^+}} \widetilde{W}^+_{ij}\, y_j - {\textstyle\sum_{k=1}^{N^-}} \widetilde{W}^-_{ik}\, x_k$
\end{algorithmic}
\end{algorithm}

\subsection{Analysis of the Sinkhorn Proxy}

  We now introduce the population counterpart to the velocity field induced by the one-shot approximation. We show two main results: first, that the velocity is zero if and only if $p=q$ under certain assumptions on $p,q,k$. Second, that the velocity field of the Sinkhorn proxy does not in general correspond to a Wasserstein gradient flow.

 Let $k(x, y) = \exp(-\|x-y\|^2 / \tau)$ be the unnormalized Gibbs kernel.\footnote{Note that this is now a positive definite kernel, not a Parzen window, so we do not employ the normalizer in \eqref{eq:rbf}.  We henceforth write $ p_{\tau}$ and $ q_{\tau}$ to denote convolutions with the unnormalized kernel, with some abuse of notation.} 
In the continuous population limit, recall\footnote{In the original algorithm, the logits for the positive and negative targets are concatenated, which would give a term $  \int k(x,x') \dd (p+q)(x')  = (k * (p+q))(x)$ in  place of $p_{\tau}(x)$.  A closer approximation to the Sinkhorn is attained in the equations we use below.}
\[
    p_{\tau}(y)   = (k * p)(y),  \qquad 
    q_{\tau}(x) = (k * q)(x). 
\]
The geometric mean proxy affinities can be written symmetrically as:
\begin{equation}
    A^+(x,y) = \frac{k(x,y)}{\sqrt{q_{\tau}(y)p_{\tau}(x)}}, \quad A^-(x,x') = \frac{k(x,x')}{\sqrt{q_{\tau}(x) q_{\tau}(x')}}.
\end{equation}
The row integrals of the independent proxies are 
\begin{equation}
s^+(x) = \int A^+(x,y) \dd p(y),
\qquad
s^-(x) = \int A^-(x,x') \dd q(x').
\end{equation}
The cross-weighting step  defines the final update weights as 
\begin{equation}
W^+(x,y) = s^-(x) A^+(x,y),
\qquad 
W^-(x,x') = s^+(x) A^-(x,x').
\end{equation}
The population drift vector for the Sinkhorn Proxy is 
\begin{equation}\label{eq:initialSinkhornProxyDrift}
V^{\mathrm{SP}}_{p,q}(x) = \int W^+(x,y) y \, \dd p(y) - \int W^-(x,x') x' \, \dd q(x'). 
\end{equation}

The velocity field \eqref{eq:initialSinkhornProxyDrift} may be written in the following alternative form, as proved in Appendix \ref{sec:population_counterpart}.
\begin{restatable}[Population form of the Sinkhorn Proxy]{prop}{propScoreInterpretation}\label{thm:scoreInterpretationDrifting}
Let  
$\dd \tilde{p}(y) = q^{-1/2}_{\tau}(y) \dd p(y)$,
and $\dd \tilde{q}(x) = q^{-1/2}_{\tau}(x) \dd q(x)$ (if $p$ and $q$ have densities, this is equivalent to the density notation $\tilde p(y)=p(y)/\sqrt{q_\tau(y)}$ and $\tilde q(y)=q(y)/\sqrt{q_\tau(y)}$).
Then the Sinkhorn Proxy velocity field \eqref{eq:proxy_drift} with Gaussian kernel has the population-limit form
\begin{equation}\label{eq:proxy_score}
    V^{\mathrm{SP}}_{p,q}(x) = \frac{\tau}{2}\, Z(x) \Big( \n \log (k * \tilde{p})(x) - \n \log (k * \tilde{q})(x) \Big),
\end{equation}
where $Z(x) = s^+(x)\,s^-(x)$ is a spatially varying pre-conditioner.
\end{restatable}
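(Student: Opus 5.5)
Write $V^{\mathrm{SP}}_{p,q}$ as a normalized mean-shift for each of the two terms, and then apply the Tweedie identity \eqref{eq:Tweedie}, which needs only the Gaussian form of $k$ and not its normalization.

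The first step is to factor out the $x$-dependent pieces of the affinities. From $A^+(x,y)=k(x,y)/\sqrt{q_\tau(y)p_\tau(x)}$ we get
\begin{equation}
\int W^+(x,y)\,y\,\dd p(y) = \frac{s^-(x)}{\sqrt{p_\tau(x)}}\int k(x,y)\,y\,\dd\tilde p(y),
\qquad
s^+(x)=\frac{(k*\tilde p)(x)}{\sqrt{p_\tau(x)}},
\end{equation}
where $\dd\tilde p = q_\tau^{-1/2}\dd p$. In the same way, from $A^-(x,x')=k(x,x')/\sqrt{q_\tau(x)q_\tau(x')}$,
\begin{equation}
\int W^-(x,x')\,x'\,\dd q(x') = \frac{s^+(x)}{\sqrt{q_\tau(x)}}\int k(x,x')\,x'\,\dd\tilde q(x'),
\qquad
s^-(x)=\frac{(k*\tilde q)(x)}{\sqrt{q_\tau(x)}}.
\end{equation}
Multiplying and dividing the first term by $s^+(x)$, and the second by $s^-(x)$, turns each into $Z(x)=s^+(x)s^-(x)$ times a normalized barycentre:
\begin{equation}
V^{\mathrm{SP}}_{p,q}(x)=Z(x)\left(\frac{\int k(x,y)\,y\,\dd\tilde p(y)}{(k*\tilde p)(x)}-\frac{\int k(x,x')\,x'\,\dd\tilde q(x')}{(k*\tilde q)(x)}\right).
\end{equation}
This is the population version of the empirical form \eqref{eq:proxy_drift}, with $\tilde\pi^\pm$ becoming the normalized kernel weights against $\tilde p$ and $\tilde q$.

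Next, subtract and add $x$ inside the bracket. Since $\nabla_x k(x,y)=\frac{2}{\tau}(y-x)k(x,y)$, differentiating under the integral gives
\begin{equation}
\frac{\int k(x,y)(y-x)\,\dd\tilde p(y)}{(k*\tilde p)(x)}=\frac{\tau}{2}\,\nabla\log(k*\tilde p)(x),
\end{equation}
and the analogous identity holds for $\tilde q$. The two $x$ terms cancel, which leaves exactly \eqref{eq:proxy_score}. This step is the same calculation as \eqref{eq:Tweedie}. The measures $\tilde p$ and $\tilde q$ are finite but not normalized, and that does not matter here, because any normalizing constant cancels in the logarithmic derivative.

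The main obstacle is justifying the manipulations rather than the algebra itself. We need $q_\tau>0$ everywhere, which holds because $k>0$ and $q$ is a probability measure. We need $\tilde p$ and $\tilde q$ to be finite, or at least to have finite convolutions with $k$ and with $|y|k$, so that all integrals are finite and we may differentiate under the integral sign. For $\tilde q$ this follows by bounding $q_\tau(x')\ge$ a local Gaussian mass of $q$. For $\tilde p$ we will assume that $\int q_\tau^{-1/2}\,\dd p<\infty$, or at least $\int k(x,y)(1+|y|)q_\tau(y)^{-1/2}\,\dd p(y)<\infty$; these are the finite-measure assumptions referred to in the introduction. Given these, dominated convergence with the Gaussian factor justifies differentiating under the integral.
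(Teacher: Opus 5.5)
Your proof is correct and matches the paper's argument step for step: rewrite $s^\pm$ as $(k*\tilde p)/\sqrt{p_\tau}$ and $(k*\tilde q)/\sqrt{q_\tau}$, apply Tweedie's identity for the unnormalized Gaussian to the $\tilde p$- and $\tilde q$-weighted integrals, and let the $x$ terms cancel. The only difference is that you normalize to barycentres before invoking Tweedie and the paper normalizes after, which is cosmetic. Your added hypothesis on $\tilde p$ is unnecessary, which matters because the statement imposes none: choosing $R_0$ with $q(B(0,R_0))\ge\tfrac12$ gives $q_\tau(y)\ge\tfrac12 e^{-(|y|+R_0)^2/\tau}$, so $k(x,y)(1+|y|)\,q_\tau(y)^{-1/2}$ is bounded in $y$ locally uniformly in $x$, and dominated convergence then works for $\tilde p$ and $\tilde q$ alike (finite total mass, which can fail for $\tilde q$ with heavy-tailed $q$, is needed only for the consistency theorem).
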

This reveals a structural resemblance to the KL score-difference drift of \Cref{sec:fixedpoint}, modulated by $Z(x)$.
The velocity field is \emph{consistent}, as shown in Appendix \ref{sec:zeroIFFPisQ}.
\begin{restatable}[Consistency]{thm}{thmConsistency}\label{thm:zeroIFFPisQ}
Assume that the weighted measures $\tilde p$ and $\tilde q$ have finite mass, and that
a Gaussian kernel $k$ is used. Then
 $V = 0$   for all $x\in\mathbb{R}^d$ if and only if $p = q$.
\end{restatable}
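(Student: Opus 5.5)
The plan is to work from the population form in Proposition~\ref{thm:scoreInterpretationDrifting}:
\[
V^{\mathrm{SP}}_{p,q}(x) = \tfrac{\tau}{2} Z(x)\bigl(\nabla\log(k*\tilde p)(x) - \nabla\log(k*\tilde q)(x)\bigr),
\]
and treat the two directions separately.

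\emph{The ``if'' direction.} If $p=q$, then $\tilde p=\tilde q$ as measures, because both are obtained from the same measure by multiplying with the weight $q_\tau^{-1/2}$. Hence $k*\tilde p = k*\tilde q$, the score difference vanishes identically, and so $V\equiv 0$. One can also check this directly from \eqref{eq:initialSinkhornProxyDrift}. When $p=q$ we have $A^+=A^-$ and hence $s^+=s^-$. The two cross-weighted integrals then coincide, so the drift cancels exactly.

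\emph{The ``only if'' direction.} This is where the finite-mass assumption is needed.
\begin{enumerate}
\item First I show that $Z(x)>0$ everywhere. Since $k>0$ and $q_\tau>0$, the affinities $A^\pm$ are strictly positive. Therefore $s^+(x)>0$ and $s^-(x)>0$, because $p$ and $q$ are nonzero measures. It follows that $V\equiv 0$ forces $\nabla\log(k*\tilde p)=\nabla\log(k*\tilde q)$ on all of $\mathbb{R}^d$.
\item Both convolutions are smooth and strictly positive, since $\tilde p$ and $\tilde q$ are finite and nonzero and $k$ is bounded. Because $\mathbb{R}^d$ is connected, $\log(k*\tilde p)-\log(k*\tilde q)$ is constant, so $k*\tilde p = c\,(k*\tilde q)$ for some $c>0$. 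By linearity of convolution, $k*(\tilde p - c\tilde q)=0$, where $\tilde p - c\tilde q$ is a finite signed measure.
\item Next I invoke characteristicness of the Gaussian kernel, i.e.\ injectivity of $\mu\mapsto k*\mu$ on finite signed measures. Taking Fourier transforms, $\widehat{k*\mu}=\hat k\,\hat\mu$ in the distributional sense, where $\hat\mu$ is a bounded continuous function and $\hat k$ is a Gaussian that is strictly positive everywhere. Hence $\hat\mu\equiv 0$, and so $\mu=0$. This gives $\tilde p = c\,\tilde q$.
\item Multiplying by the strictly positive finite function $q_\tau^{1/2}$, which undoes the reweighting, gives $p = c\,q$. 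Since $p$ and $q$ are both probability measures, $c=1$ and $p=q$.
\end{enumerate}

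\emph{Main obstacle.} The delicate step is the measure-theoretic justification in steps 2 and 3. We need $k*\tilde p$ and $k*\tilde q$ to be finite, differentiable, and to allow differentiation under the integral. This follows from the finiteness of $\tilde p$ and $\tilde q$ together with the boundedness of $k$ and $\nabla k$. We also need the Fourier injectivity argument to apply to the signed measure $\tilde p - c\tilde q$, which is exactly why the finite-mass hypothesis is imposed. A further check is that $q_\tau(y)>0$ for every $y$, so that the reweighting by $q_\tau^{-1/2}$ is well defined and invertible. This holds because $k>0$ and $q$ is a probability measure.
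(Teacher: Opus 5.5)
Your proof is correct and follows essentially the same route as the paper. Both arguments use positivity of $Z$, then integrate the equal scores to get $k*\tilde p = c\,(k*\tilde q)$, then use injectivity of the Gaussian mean embedding on finite signed measures to get $\tilde p = c\,\tilde q$, and finally undo the $q_\tau^{-1/2}$ reweighting and normalize to obtain $c=1$. The only differences are that you prove injectivity directly from the everywhere-positive Fourier transform of the Gaussian, where the paper cites the characteristic-kernel result of Sriperumbudur et al., and that you add a direct check of the easy direction from the cross-weighted form.
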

The finite mass of $\tilde{p}(y) = p(y)/\sqrt{q_{\tau}(y)}$ is not guaranteed. For instance, when $q$ is light-tailed, $p$ is heavy-tailed,  both are supported on $\mathbb{R}^d$, and the kernel $k$ performs insufficient blurring on $q$,  then the assumption can fail. 

As we show in Appendix \ref{sec:curlProof}, however, $V^{\mathrm{SP}}_{p,q}$ does \emph{not} correspond to a valid Wasserstein gradient flow for the {\em Gaussian kernel}, similar to the finding of \citet{franz2026drifting} for the KL-KDE drift with {\em non-Gaussian kernels}   (Section \ref{sec:kl_drift}).

\begin{restatable}[Non-gradient structure]{prop}{propCurl}\label{thm:curl}
The velocity field $V^{\mathrm{SP}}_{p,q}$ in \eqref{eq:proxy_score}, with a Gaussian kernel, corresponds to a valid Wasserstein gradient flow only when the gradients of the pre-conditioner and the score difference are linearly dependent everywhere:
\begin{equation}\label{eq:curlCondition}
    \partial_i Z(x) \partial_j \Phi(x) = \partial_j Z(x) \partial_i \Phi(x) \quad \text{for all } i,j,
\end{equation}
where $\Phi(x) = \frac{\tau}{2}(\log(k * \tilde{p})(x) - \log(k * \tilde{q})(x))$. Equivalently, the wedge product must vanish: $\nabla Z \wedge \nabla \Phi = 0$.
\end{restatable}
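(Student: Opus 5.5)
The plan is to use the fact that a Wasserstein gradient flow velocity must be a gradient field. By \eqref{eq:continuity}, a valid WGF for some functional $\cF_p$ requires $V^{\mathrm{SP}}_{p,q} = -\nabla \frac{\delta \cF_p}{\delta q}$. So for fixed $(p,q)$, $V^{\mathrm{SP}}_{p,q}$ must be the gradient of a scalar potential on $\R^d$. Since $\R^d$ is simply connected, for a $C^1$ field this holds exactly when the field is curl-free, i.e.\ its Jacobian is symmetric: $\partial_i V_j = \partial_j V_i$ for all $i,j$. The proof therefore reduces to computing the antisymmetric part of the Jacobian of \eqref{eq:proxy_score}.

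\textbf{Step 1 (regularity).} For the Gaussian kernel, $k*\tilde p$ and $k*\tilde q$ are strictly positive and $C^\infty$, provided $\tilde p,\tilde q$ are locally finite; in the consistency setting they have finite mass. Hence $\Phi$ is smooth. The same holds for $s^+$ and $s^-$: these are integrals of the smooth kernel against $p$ and $q$, divided by the smooth positive $\sqrt{p_\tau}$ and $\sqrt{q_\tau}$. So $Z$ is smooth and positive, and the curl criterion applies.

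\textbf{Step 2 (the curl computation).} By Proposition~\ref{thm:scoreInterpretationDrifting}, $V^{\mathrm{SP}}_{p,q} = Z\,\nabla\Phi$. The product rule gives
\[
\partial_i V_j - \partial_j V_i = \partial_i Z\,\partial_j\Phi - \partial_j Z\,\partial_i \Phi + Z\bigl(\partial_i\partial_j\Phi - \partial_j\partial_i\Phi\bigr).
\]
The last term vanishes by Schwarz's theorem. So the curl is precisely $\nabla Z\wedge\nabla\Phi$, and it vanishes everywhere if and only if \eqref{eq:curlCondition} holds. Pointwise, this says $\nabla Z$ and $\nabla\Phi$ are linearly dependent (wherever $\nabla\Phi\neq 0$, $\nabla Z$ must be parallel to it). In the other direction, if the condition holds and $\nabla\Phi\neq0$, one can locally write $Z = g(\Phi)$, so that $V = \nabla G(\Phi)$ with $G' = g$. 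This shows the condition is also sufficient for $V$ to be a gradient field.

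\textbf{Main obstacle.} The subtle point is what "corresponds to a valid Wasserstein gradient flow" should mean. The potential must arise as the first variation $\frac{\delta\cF_p}{\delta q}$ of a single functional, consistently for all $q$. Being a gradient for each fixed $q$ is only a necessary condition, and that is all the proposition requires. I would therefore state the result as a necessary condition and note that exactness for each $q$ is what is being characterized. I would also remark briefly that \eqref{eq:curlCondition} generically fails: $Z$ depends on $s^\pm$ through $p,q$ in a way unrelated to the level sets of $\Phi$, which is why the paper calls it implausible.
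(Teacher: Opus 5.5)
Your proposal is correct and follows the same route as the paper. You write $V^{\mathrm{SP}}_{p,q}=Z\nabla\Phi$, use Poincar\'e's lemma on the simply connected $\R^d$ to reduce gradient-ness to symmetry of the Jacobian, and cancel the $Z\,\partial_i\partial_j\Phi$ terms by Schwarz's theorem, leaving exactly $\nabla Z\wedge\nabla\Phi=0$.

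Your regularity check and your remark that being a gradient for each fixed $q$ is only a necessary condition for a genuine WGF are sound refinements the paper leaves implicit. One caveat on the regularity check: it is finite mass of $\tilde p,\tilde q$, not mere local finiteness, that makes the Gaussian convolutions finite and smooth.
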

We give here a simple example where \Cref{eq:curlCondition} fails.

\paragraph{A toy example.}
Let $d=2$ and take $p=\delta_b, q=\delta_a,
a=(-1,0), b=(1,0)$  and $k(x,y)=\exp\{-\|x-y\|^2/\tau\}$. Then, one can easily check that 
\begin{equation}
\Phi(x)
=
\frac{\tau}{2}
\left[
\log(k*\widetilde p)(x)-\log(k*\widetilde q)(x)
\right]
=
1+2x_1.
\end{equation}
On the other hand,
\begin{equation}
Z(x)
=
s^+(x)s^-(x)
=
\exp(2/\tau)\sqrt{k(x,b)k(x,a)}.
\end{equation}
So at  $x=(0,1)$, simple calculations yield $\nabla \Phi(0,1)=(2,0)$ and $\nabla Z(0,1)=
\left(0,-2Z(0,1)/\tau \right)$ which are not parallel, or equivalently
\begin{equation}
\partial_1 Z(0,1)\partial_2\Phi(0,1)
-
\partial_2 Z(0,1)\partial_1\Phi(0,1)
=
\frac{4Z(0,1)}{\tau}
\neq 0.
\end{equation}
Consequently, the necessary condition in \Cref{thm:curl} fails.

\subsection{Illustration: Sinkhorn Proxy flow vs KL flow and Wasserstein-2 flow}

To illustrate a key limitation of both KL-KDE and the Sinkhorn Proxy,
we demonstrate that neither is able to shift mass from one mode to another when the modes
are far apart relative to kernel width $\tau$. This  indicates
that the Sinkhorn Proxy does not recover the behaviour one would expect for gradient flow on a mass transport metric. 
 
 We consider a synthetic example where the target distribution $p$ and the model distribution $q$ are mixtures of two identical point masses at $+D$ and $-D$, differing only in their mixing proportions $\alpha$ and $\beta$, with $\alpha, \beta \in (0,1)$ and $\alpha \neq \beta$:
\begin{align}
    p(y) &= \alpha \updelta_{-D}(y) + (1-\alpha) \updelta_{+D}(y), \\
    q(x) &= \beta \updelta_{-D}(x) + (1-\beta) \updelta_{+D}(x).
\end{align}
We assume $\beta < \alpha$, meaning the model has insufficient mass at the left mode ($-D$) and excess mass at the right mode ($+D$). 

We evaluate the continuous vector fields for a model particle currently residing within the overcrowded cluster at $x = +D$. We use the unnormalized Gaussian kernel $k(x,y) = \exp(-\|x-y\|^2/\tau)$. 
We require the kernel bandwidth $\tau$ to be sufficiently narrow compared to the mode separation $2D$, such that
\begin{equation}\label{eq:thinKernel}
    \epsilon = k(D, -D) = \exp\left(-\frac{4D^2}{\tau}\right) \ll 1.
\end{equation}
For the algorithm to successfully correct the mixing proportions, it must yield a strict \textit{negative} drift at $x = +D$, routing excess mass across the manifold to the starved mode at $-D$.

The proofs of all the results in this Section are given in Appendix~\ref{sec:failureSyntheticProofs}. We first describe the failure of KL-KDE.
\begin{restatable}{prop}{KLfailuremode}
For fixed $D>0$ and fixed $\alpha,\beta\in(0,1)$ with $\beta<\alpha$, as $\tau\downarrow0$ (equivalently $\epsilon=\exp(-4D^2/\tau)\downarrow0$ and \eqref{eq:thinKernel} holds) the velocity field $V_{\textup{KL}}$ of the KL-KDE flow with Gaussian kernel at $x = +D$ is given by
\begin{equation}
    V^{\textup{KL}}_{p,q}(D) = -2D \epsilon \left( \frac{\alpha}{1-\alpha} - \frac{\beta}{1-\beta} \right)  + o(\epsilon).
\end{equation}
\end{restatable}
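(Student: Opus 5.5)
We plan to evaluate the KL-KDE drift directly in its mean-shift form \eqref{eq:driftingfield}--\eqref{eq:meanshift}, rather than through the score form \eqref{eq:v-score-diff}. By \eqref{eq:Tweedie}, the two forms coincide for the Gaussian kernel. Any normalizing constant of the kernel appears in both numerator and denominator of each ratio in \eqref{eq:meanshift}, so it cancels. We may therefore use the unnormalized kernel $k(x,y)=\exp(-\|x-y\|^2/\tau)$ and the same result holds for \eqref{eq:rbf}. At $x=+D$ the kernel values are $k(D,D)=1$ and $k(D,-D)=\epsilon$, with $\epsilon$ as in \eqref{eq:thinKernel}. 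Since $p$ and $q$ are two-atom measures, every expectation in \eqref{eq:meanshift} becomes a finite sum, and the computation is explicit.

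The first step is the attraction term. It equals
\begin{equation}
V_p(D)=\frac{\alpha\epsilon(-D)+(1-\alpha)D}{\alpha\epsilon+(1-\alpha)}-D
=\frac{-2D\alpha\epsilon}{\alpha\epsilon+(1-\alpha)}.
\end{equation}
This follows by subtracting $D$ times the denominator from the numerator. The repulsion term is the same expression with $\alpha$ replaced by $\beta$:
\begin{equation}
V_q(D)=\frac{-2D\beta\epsilon}{\beta\epsilon+(1-\beta)}.
\end{equation}

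The second step is the expansion in $\epsilon$. Because $\alpha$ and $\beta$ are fixed in $(0,1)$, we have $1-\alpha>0$ and $1-\beta>0$, so each denominator is bounded away from zero. We can then write
\begin{equation}
\frac{\alpha\epsilon}{\alpha\epsilon+1-\alpha}=\frac{\alpha\epsilon}{1-\alpha}\Big(1+\frac{\alpha\epsilon}{1-\alpha}\Big)^{-1}=\frac{\alpha}{1-\alpha}\epsilon+O(\epsilon^2),
\end{equation}
and similarly with $\beta$ in place of $\alpha$. Taking the difference $V_p(D)-V_q(D)$ gives
\begin{equation}
V^{\textup{KL}}_{p,q}(D)=-2D\epsilon\Big(\frac{\alpha}{1-\alpha}-\frac{\beta}{1-\beta}\Big)+O(\epsilon^2).
\end{equation}
This is the claimed expansion, since $O(\epsilon^2)=o(\epsilon)$. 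The limit $\tau\downarrow0$ with $D$ fixed is equivalent to $\epsilon\downarrow0$, so the expansion holds in exactly the stated regime.

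There is no substantial obstacle here. The only point needing care is the drift convention. The proposition refers to the drift field $V_{p,q}=V_p-V_q$, which by \eqref{eq:v-score-diff} equals $\frac{\tau}{2}$ times the smoothed score difference. If the score difference itself were meant, a factor $2/\tau$ would appear; we will state that we use the convention of \eqref{eq:driftingfield}. We will also remark on the sign. Since $t\mapsto t/(1-t)$ is increasing and $\beta<\alpha$, the bracket is positive, so the drift is negative, as the correction of the mixing proportions requires. However, it is only of order $\epsilon=\exp(-4D^2/\tau)$, which is exponentially small in $D^2/\tau$. This is the failure mode the section is intended to exhibit.
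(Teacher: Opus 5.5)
Your proposal is correct and follows essentially the same route as the paper: an explicit evaluation of the two-atom attraction and repulsion terms at $x=+D$, followed by a first-order expansion in $\epsilon$. The only difference is that you start from the mean-shift form rather than the score form $\frac{\tau}{2}(\nabla\log p_\tau-\nabla\log q_\tau)$. By Tweedie's formula this yields the identical expression $-2D\alpha\epsilon/\bigl((1-\alpha)+\alpha\epsilon\bigr)$, and it gives the remainder directly as $O(\epsilon^2)$. That makes the paper's extra step showing $O(\epsilon^2/\tau)\subset o(\epsilon)$ unnecessary.
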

Unsurprisingly, the velocity vanishes for small $\epsilon$.
We now show that the Sinkhorn proxy method has the  same failure mode.
\begin{restatable}{prop}{propFailureMode}
For fixed $D>0$ and fixed $\alpha,\beta\in(0,1)$ with $\beta<\alpha$, as $\tau\downarrow0$ (equivalently $\epsilon=\exp(-4D^2/\tau)\downarrow0$),
 the velocity field of the Sinkhorn Proxy flow with Gaussian kernel at $x =D$ is given by
\begin{equation}
     V^{\mathrm{SP}}_{p,q}(D)
     = -2D\epsilon 
     \underbrace{\left[ \sqrt{\frac{1-\alpha}{\beta}}   \right]}_{\text{(*)}}
     \left( \frac{\alpha}{1-\alpha} - \frac{\beta}{1-\beta} \right) + o(\epsilon),
\end{equation}
where the term (*) represents the difference with the KL-KDE flow.
\end{restatable}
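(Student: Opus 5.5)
The plan is to evaluate the population drift \eqref{eq:initialSinkhornProxyDrift} directly from its definition, since $p$ and $q$ are atomic; we will not route through the score form of Proposition~\ref{thm:scoreInterpretationDrifting}. All integrals then become two-term sums over the atoms $\pm D$.

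First, record the smoothed masses using $k(D,D)=k(-D,-D)=1$ and $k(D,-D)=\epsilon$:
\[
p_\tau(D)=(1-\alpha)+\alpha\epsilon,\quad q_\tau(D)=(1-\beta)+\beta\epsilon,\quad q_\tau(-D)=\beta+(1-\beta)\epsilon .
\]
Next, at $x=D$ name the four weighted atom contributions:
\[
a=\frac{\alpha\epsilon}{\sqrt{q_\tau(-D)p_\tau(D)}},\quad b=\frac{1-\alpha}{\sqrt{q_\tau(D)p_\tau(D)}},\quad c=\frac{\beta\epsilon}{\sqrt{q_\tau(D)q_\tau(-D)}},\quad e=\frac{1-\beta}{q_\tau(D)} .
\]
Here $a,b$ come from $A^+(D,\cdot)$ weighted by $p$ at $-D$ and $+D$, and $c,e$ come from $A^-(D,\cdot)$ weighted by $q$ at $-D$ and $+D$. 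With this notation the row integrals and first moments are
\[
s^+(D)=a+b,\qquad s^-(D)=c+e,\qquad \int A^+(D,y)\,y\,\dd p(y)=D(b-a),\qquad \int A^-(D,x')\,x'\,\dd q(x')=D(e-c).
\]

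The key algebraic step is the cross-weighting. Substituting into the definition gives
\[
V^{\mathrm{SP}}_{p,q}(D)=D\big[(c+e)(b-a)-(a+b)(e-c)\big]=2D\,(bc-ae).
\]
In this product the diagonal terms $eb$ cancel exactly. That cancellation is precisely what the cross-weighting buys, and it shows that the drift is $O(\epsilon)$ exactly, not just asymptotically.

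It remains to expand $bc$ and $ae$ to first order in $\epsilon$. Both already carry a factor $\epsilon$, so we only need the $O(1)$ limits of the denominators, with errors $O(\epsilon)$ giving an $o(\epsilon)$ remainder:
\[
bc=\epsilon\,\frac{\sqrt{\beta(1-\alpha)}}{1-\beta}+O(\epsilon^2),\qquad ae=\epsilon\,\frac{\alpha}{\sqrt{\beta(1-\alpha)}}+O(\epsilon^2).
\]
Factoring out $\sqrt{(1-\alpha)/\beta}$ yields
\[
V^{\mathrm{SP}}_{p,q}(D)=2D\epsilon\sqrt{\tfrac{1-\alpha}{\beta}}\Big(\tfrac{\beta}{1-\beta}-\tfrac{\alpha}{1-\alpha}\Big)+o(\epsilon),
\]
which is the claim.

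The main care point is bookkeeping rather than any deep step. We must use the correct normalizer in each affinity: $A^+$ uses $q_\tau$ at the data point and $p_\tau$ at $x$, not $p_\tau$ twice. We must also check that the expansion is uniform, which holds because $\alpha,\beta\in(0,1)$ keep all limiting denominators bounded away from zero.
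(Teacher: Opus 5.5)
Your proof is correct, but it takes a different route from the paper's. The paper starts from the score form of Proposition~\ref{thm:scoreInterpretationDrifting}. It expands $\tilde p,\tilde q$, then evaluates $(k*\tilde p)(D)$, $(k*\tilde q)(D)$ and their gradients, which carry factors $4D/\tau$. From these it forms the score difference $\approx -2D\epsilon\sqrt{(1-\beta)/\beta}\,\bigl(\frac{\alpha}{1-\alpha}-\frac{\beta}{1-\beta}\bigr)$ and multiplies by the separately expanded preconditioner $Z(D)\approx\sqrt{(1-\alpha)/(1-\beta)}$. It then needs the auxiliary limit \eqref{eq:whyEpsSqTauVanish} to absorb the $O(\epsilon^2/\tau)$ remainders.

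You work directly from the definition \eqref{eq:initialSinkhornProxyDrift} and obtain the exact identity $V^{\mathrm{SP}}_{p,q}(D)=2D(bc-ae)$. In it the $O(1)$ terms cancel through the cross-weighting, and $\tau$ enters only through $\epsilon$. This route avoids Tweedie's formula entirely and gives a clean $O(\epsilon^2)$ remainder with no separate $\epsilon$-versus-$\tau$ comparison. It also shows that the drift at $D$ is exactly $\epsilon$ times an explicit smooth function of $\epsilon$.

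The two computations are algebraically equivalent: $-2D\,a\,s^-(D)$ and $2D\,c\,s^+(D)$ are exactly the two terms of the paper's $Z(D)\cdot\frac{\tau}{2}\bigl(\nabla\log(k*\tilde p)-\nabla\log(k*\tilde q)\bigr)(D)$. What the paper's factorization buys is interpretability. It splits the factor (*) into two pieces: the $q_\tau^{-1/2}$ reweighting inside the score difference contributes $\sqrt{(1-\beta)/\beta}$, and the preconditioner $Z(D)$ contributes $\sqrt{(1-\alpha)/(1-\beta)}$. This makes the comparison with the KL-KDE velocity structurally transparent.
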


We now contrast the Sinkhorn proxy velocity field with the mass transport dynamics of an optimal transport distance, specifically the barycentric velocity at $x = +D$ for the Wasserstein gradient flow of $\frac12 W_2^2(q,p)$ (see Appendix~\ref{sec:failureSyntheticProofs} for discussion).

\begin{restatable}{prop}{trueWassersteinGF}
The barycentric velocity of the Wasserstein gradient flow on  $\frac12 W_2^2(q,p)$ at $x=D$  is given by
\begin{equation}
    V^{W_2}_{p,q}(D) = -2D \left( \frac{\alpha - \beta}{1 - \beta} \right).
\end{equation}
\end{restatable}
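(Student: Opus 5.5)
The proof will have two parts. First, I will identify the velocity of the Wasserstein gradient flow of $\frac12 W_2^2(\cdot,p)$ at $q$ with a barycentric projection. Then I will compute that projection explicitly for the two-atom measures.

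For the first part, I recall that for $\cF_p(q)=\frac12 W_2^2(q,p)$ the first variation is a Kantorovich potential $\varphi$ for the cost $\frac12\|x-y\|^2$. When an optimal map $T$ exists, this gives $\nabla\frac{\delta\cF_p}{\delta q}(x)=x-T(x)$, so by \eqref{eq:continuity} the velocity is $V^{W_2}_{p,q}(x)=T(x)-x$.

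Here $q$ is atomic, so no map exists and the optimal plan has to split mass. I will therefore use the standard replacement from \citet[Thm.~10.4.12]{ambrosio2008gradient}. The minimal-norm element of the Wasserstein subdifferential of $\frac12 W_2^2(\cdot,p)$ at $q$ is $x-\bar\gamma(x)$, where $\bar\gamma(x)=\int y\,\dd\gamma_x(y)$ is the barycentric projection of an optimal plan $\gamma$ and $\gamma_x$ is its conditional given $x$. This is what the proposition calls the barycentric velocity, $V^{W_2}_{p,q}(x)=\bar\gamma(x)-x$.

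This step is where I expect the only real subtlety. For atomic $q$ the subdifferential need not be a singleton, and one has to make sure the barycentric choice is well defined. I will settle this with two observations. In one dimension with a strictly convex cost, the optimal plan is unique and is given by the monotone (quantile) coupling. Consequently $\gamma_x$, and hence $\bar\gamma(x)$, is uniquely determined on the support of $q$.

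For the second part, I construct the monotone coupling between $q=\beta\updelta_{-D}+(1-\beta)\updelta_{+D}$ and $p=\alpha\updelta_{-D}+(1-\alpha)\updelta_{+D}$, using $\beta<\alpha$. Matching quantile functions gives the following transport:
\begin{itemize}
\item mass $\beta$ from $-D$ to $-D$;
\item mass $\alpha-\beta$ from $+D$ to $-D$;
\item mass $1-\alpha$ from $+D$ to $+D$.
\end{itemize}
As a check, the mass leaving $+D$ totals $(\alpha-\beta)+(1-\alpha)=1-\beta$, which is exactly the mass $q$ places at $+D$. Optimality of this plan is automatic, since it is the monotone rearrangement.

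Conditioning on $x=+D$ gives $\gamma_D=\frac{\alpha-\beta}{1-\beta}\updelta_{-D}+\frac{1-\alpha}{1-\beta}\updelta_{+D}$. Hence
\begin{equation}
\bar\gamma(D)-D=\frac{-D(\alpha-\beta)+D(1-\alpha)-D(1-\beta)}{1-\beta}=-2D\,\frac{\alpha-\beta}{1-\beta},
\end{equation}
which is the claimed formula.

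I will close by remarking that this velocity is independent of any kernel bandwidth and does not vanish as $\tau\downarrow 0$. This is the contrast with the $O(\epsilon)$ velocities obtained in the two preceding propositions.
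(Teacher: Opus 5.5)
Your proposal is correct and follows essentially the same route as the paper. Both build the optimal plan that keeps $\beta$ at $-D$, sends $\alpha-\beta$ from $+D$ to $-D$ and keeps $1-\alpha$ at $+D$, then take the conditional mean destination at $+D$ minus $D$; your monotone-coupling argument justifies optimality and uniqueness more cleanly than the paper's ``keep mass in place wherever possible'' reasoning.

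One small caveat concerns your identification of $x-\bar\gamma(x)$ with the minimal-norm element of the Wasserstein subdifferential. At this atomic $q$ the map-valued subdifferential of $\frac12 W_2^2(\cdot,p)$ is actually empty: moving the portion of the $+D$ atom bound for $-D$ slightly left forces $\xi(D)\ge 2D$, while moving the portion that stays slightly right forces $\xi(D)\le 0$. That identification therefore has to go through the plan-valued extended subdifferential of Ambrosio--Gigli--Savar\'e, whose barycentric projection is $x-\bar\gamma(x)$; the paper sidesteps this by taking the barycentric projection of the optimal plan as the definition of the velocity.
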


Velocity increases with D, and there is no scaling by an infinitesimal $\epsilon$.

\section{Extensions}\label{sec:extensions}
WGF techniques have been previously proposed to perform generative modeling for various functionals; e.g., Maximum Mean Discrepancy \citep{arbel2019mmdflow}, Sobolev witness \citep{mroueh2019sobolev} and Sliced-Wasserstein distance \citep{liutkus2019sliced,cozzi2024swflow}. WGF have also been used to solve inverse problems \citep{crucinio2022solving}. Contrary to GMD, all these approaches implement time-discretization of the ODE \eqref{eq:particle-ode}.  As noted by \citet{cao2026gradientflowdrifting,turan2026generativedriftingsecretlyscore}, the methodology of \cite{deng2026drifting} offers an alternative that can be readily implemented for any Wassertein gradient flow. This might prove a useful addition to existing techniques.

\subsection{Practical algorithm}\label{sec:algo}
Given an energy function $\cF_p(q)$ such that  $\cF_p(p)=0$, the general method to develop a one-stage generator $f_\theta$ to sample from $p$ is as follows:
\begin{itemize}
\item identify its first variation $\delta \cF_p/\delta q(x)$;
\item set $V_{p,q}(x)=-\nabla \delta \cF_p/\delta q(x)$;
\item approximate $V_{p,q_\theta}(x)$ via $\widehat{V}_{p,q_\theta}(x)$ (e.g. expectations are approximated by mini-batches, approximation of Tweedie's formula etc.). When empirical samples $\{y_j\}\iidsim p$ and $x_k=f_\theta(\epsilon_k)$, $\{\epsilon_k\}\iidsim \mu$ are available, we write $\widehat V_{p,q_{\theta}}(x;\{y_j\}_{j=1}^{N^+},\{x_k\}_{k=1}^{N^-})$.
\end{itemize}
Finally instead of minimizing $\mathbb{E}_{\epsilon \sim \mu}[\|\widehat{V}_{p,q_\theta}(f_\theta(\epsilon))\|^2]$ w.r.t. $\theta$, minimize
\begin{equation}\label{eq:general_loss}
\mathcal{L}(\theta)=\E_{\epsilon\sim \mu}\Big[\|f_{\theta}(\epsilon)-\sg\!\big(f_{\theta}(\epsilon) + \eta\widehat{V}_{p,q_{\theta}}(f_{\theta}(\epsilon))\big)\|^2\Big]
\end{equation}
where $\eta>0$ is a hyperparameter. 
The generic algorithm is detailed in Algorithm \ref{alg:WGF}.
\begin{algorithm}[t]
\caption{Generic drifted-target method to minimize an energy function $\cF_p(q)$}
\label{alg:WGF}
\begin{algorithmic}[1]
\Require data sampler $y\sim p$, noise sampler $\epsilon\sim \mu$, generator $f_\theta$, initial parameter $\theta$
\For{$\ell=0,1,2,\dots$}
  \State sample $\{y_j\}_{j=1}^{N^+}\iidsim p$ \Comment{Real batch}
  \State sample $\{\epsilon_i\}_{i=1}^{N^-}\iidsim \mu$, set $x_i=f_{\theta}(\epsilon_i)$ for all $i \in \{1,\ldots,N^-\}$ \Comment{Model batch}
  \State compute drifts $\widehat V_{p,q_{\theta}}(x_i) = \widehat V_{p,q_{\theta}}(x_i;\{y_j\}_{j=1}^{N^+},\{x_k\}_{k=1}^{N^-})$
  \State update $\theta$ by minimizing $\sum_{i=1}^{N^-} \|f_\theta(\epsilon_i)-\sg(x_i+ \widehat{V}_{p,q_\theta}(x_i))\|^2$
\EndFor
\end{algorithmic}
\end{algorithm}

The following Proposition, proven in Appendix~\ref{sec:loss_gradient_proof}, should be read as a surrogate-gradient identity: the target is detached at the current parameter value. A gradient step on the drifted target loss evaluated at the current iterate is equal, up to the scalar $2\eta$, to a gradient step on
the energy functional. Therefore, we set the $\eta=1$ in Algorithm~\ref{alg:WGF}.
\begin{restatable}[Surrogate-gradient identity]{prop}{propLossGradient}\label{prop:loss_gradient}
Let $q_\theta = (f_\theta)_\# \mu$ be the pushforward of $\mu$ through $f_\theta$, and suppose we have the exact velocity field $V_{p,q_\theta}(x) = -\nabla_x \frac{\delta \cF_p}{\delta q_\theta}(x)$ for some functional $\mathcal{F}_p$. Then
\begin{equation}
    \nabla_\theta \mathcal{L}(\theta) = 2\eta\, \nabla_\theta \cF_p(q_\theta).
\end{equation}
\end{restatable}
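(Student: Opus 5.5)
The plan is to compute both sides of the identity separately and match them. Both sides will reduce to the same expression, $\E_{\epsilon\sim\mu}\big[(\partial_\theta f_\theta(\epsilon))^\top \nabla_x \frac{\delta \cF_p}{\delta q_\theta}(f_\theta(\epsilon))\big]$, up to the factor $2\eta$. Throughout I work under the regularity assumptions already invoked in \Cref{subsec:WGF}: $f_\theta$ is differentiable in $\theta$, the first variation exists and is $C^1$ in $x$, and derivatives may be exchanged with $\E_{\epsilon\sim\mu}$ (dominated convergence).

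\emph{Left-hand side.} The stop-gradient treats the target $\sg(f_\theta(\epsilon)+\eta V_{p,q_\theta}(f_\theta(\epsilon)))$ as a constant $c(\epsilon)$ frozen at the current $\theta$. Differentiating $\|f_\theta(\epsilon)-c(\epsilon)\|^2$ under the expectation gives $2(\partial_\theta f_\theta(\epsilon))^\top (f_\theta(\epsilon)-c(\epsilon))$. Evaluated at the current iterate, $f_\theta(\epsilon)-c(\epsilon) = -\eta V_{p,q_\theta}(f_\theta(\epsilon))$. Hence
\[
\nabla_\theta\mathcal{L}(\theta) = -2\eta\,\E_\epsilon\big[(\partial_\theta f_\theta(\epsilon))^\top V_{p,q_\theta}(f_\theta(\epsilon))\big] = 2\eta\,\E_\epsilon\Big[(\partial_\theta f_\theta(\epsilon))^\top \nabla_x \tfrac{\delta \cF_p}{\delta q_\theta}(f_\theta(\epsilon))\Big],
\]
where the last step uses $V=-\nabla_x \delta\cF_p/\delta q$. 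This is exactly where the stop-gradient matters: without it, extra terms from differentiating $V_{p,q_\theta}$ through both its argument and its dependence on $q_\theta$ would appear.

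\emph{Right-hand side.} Fix a direction $v$ in parameter space and set $\theta_h=\theta+hv$. Then $q_{\theta_h}=(f_{\theta_h})_\#\mu$, and to first order $f_{\theta_h}=f_\theta+h\,\partial_\theta f_\theta\, v+o(h)$. I would use the defining expansion of the first variation along the curve $h\mapsto q_{\theta_h}$:
\[
\frac{\dd}{\dd h}\Big|_{h=0}\cF_p(q_{\theta_h}) = \lim_{h\to0}\frac1h\int \frac{\delta\cF_p}{\delta q_\theta}\,\dd(q_{\theta_h}-q_\theta).
\]
Rewriting the integral against the pushforwards gives
\[
\frac1h\,\E_\epsilon\Big[\tfrac{\delta\cF_p}{\delta q_\theta}(f_{\theta_h}(\epsilon))-\tfrac{\delta\cF_p}{\delta q_\theta}(f_\theta(\epsilon))\Big],
\]
and a Taylor expansion in $x$ yields $\E_\epsilon[\nabla_x\frac{\delta\cF_p}{\delta q_\theta}(f_\theta(\epsilon))\cdot \partial_\theta f_\theta(\epsilon) v]$. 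Equivalently, this is the Wasserstein chain rule $\frac{\dd}{\dd h}\cF_p(q_{\theta_h})=\int \langle \nabla \frac{\delta\cF_p}{\delta q},\, w\rangle\,\dd q_\theta$ with tangent velocity $w(f_\theta(\epsilon))=\partial_\theta f_\theta(\epsilon) v$, the same computation as the dissipation identity in \Cref{subsec:WGF}. Since $v$ is arbitrary, $\nabla_\theta\cF_p(q_\theta)$ equals the common expression above, and comparing with the left-hand side gives the claim.

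\emph{Main obstacle.} The main obstacle is the rigorous justification of this chain rule. Two issues arise. First, the perturbation $q_{\theta_h}-q_\theta$ is not of the linear form $h\chi$ used in the definition of the first variation. Second, $q_\theta$ may be singular when $c<d$, as noted in the footnote of \Cref{subsec:WGF}. I would first try to handle both by treating $\cF_p$ as differentiable along absolutely continuous curves in $\cP_2$, in the sense of \citet{ambrosio2008gradient}, with remainder $o(h)$ controlled by $W_2(q_{\theta_h},q_\theta)\le h\,\|\partial_\theta f_\theta v\|_{L^2(\mu)}+o(h)$. If that is too heavy, I would state the result formally under these regularity assumptions, consistent with the paper's level of rigor.
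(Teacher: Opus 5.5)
Your proof is correct at the paper's level of rigor, and your left-hand-side computation is the same as the paper's.

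For the right-hand side you take a Lagrangian route where the paper goes Eulerian. You write $\int \frac{\delta \cF_p}{\delta q_\theta}\,\dd(q_{\theta_h}-q_\theta)$ directly as $\E_\epsilon\bigl[\frac{\delta\cF_p}{\delta q_\theta}(f_{\theta_h}(\epsilon))-\frac{\delta\cF_p}{\delta q_\theta}(f_\theta(\epsilon))\bigr]$ via the pushforward, then Taylor-expand in $x$. The paper instead defines $v_\theta(x)=\E[\nabla_\theta f_\theta(\epsilon)\mid f_\theta(\epsilon)=x]$ and invokes the continuity equation $\nabla_\theta q_\theta=-\nabla_x\cdot(q_\theta v_\theta)$. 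It then integrates by parts, assuming vanishing boundary terms, and uses the tower property to return to an expectation over $\epsilon$.

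Your computation is exactly the weak form of that continuity equation, tested against the frozen first variation, so the paper's trip into $x$-space and back collapses. As a result you need no density for $q_\theta$, which covers the singular $c<d$ case mentioned in the footnote of \Cref{subsec:WGF}. You also need no boundary terms and no disintegration to handle non-injective $f_\theta$, since averaging over $\epsilon$ takes care of that. What the paper's route gives in return is an explicit tangent velocity field at $q_\theta$ induced by the parameter perturbation, mirroring the dissipation computation for $\frac{\dd}{\dd t}\cF_p(q_t)$.

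Both arguments rest on the same unproved step: the derivative of $\cF_p$ along the nonlinear curve $h\mapsto q_{\theta_h}$ is obtained by pairing with the first variation. The paper simply writes $\nabla_\theta\cF_p(q_\theta)=\int \frac{\delta\cF_p}{\delta q_\theta}\nabla_\theta q_\theta\,\dd x$ and does not justify it. You are right to flag this as the real obstacle.

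One small correction to your ``equivalently'' remark. When $f_\theta$ is not injective, $w(f_\theta(\epsilon))=\partial_\theta f_\theta(\epsilon)v$ does not define a function of $x$. The tangent velocity should be the conditional expectation $\E[\partial_\theta f_\theta(\epsilon)v\mid f_\theta(\epsilon)=x]$, as in the paper. This does not affect your main argument, which never needs $w$ as a function of $x$.
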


\subsection{Applications}\label{sec:otherDivergenceMeasures}

The approximation of $V_{p,q_\theta}$ is very dependent on the energy functional being minimized.
We give a few potential applications using different types of flows, in addition to the KL flow in Section~\ref{sec:kl_flow}.

\paragraph{Maximum mean discrepancy (MMD) flow} 
Let $k:\R^d\times\R^d\to\R$ be a differentiable reproducing kernel (e.g.\ RBF), and let $\mathcal{H}$ be the corresponding reproducing kernel Hilbert space. The Maximum Mean Discrepancy (MMD) \citep{mmd-jmlr} is defined as
\begin{align}
\mathrm{MMD}_{k}^{2}(p,q) &=\left(\sup_{g\in\mathcal{H},\|g\|\le1}\E_{x\sim q}[g(x)]-\E_{y\sim p}[g(y)]\right)^{2} \\
  &=\E_{x,x'\sim q}[k(x,x')]-2\E_{x\sim q,y\sim p}[k(x,y)]+\E_{y,y'\sim p}[k(y,y')].
\end{align} 
The function $g^*$ attaining the supremum is the MMD witness function. 
The WGF applied to $\cF^{\mathrm{MMD}}_p(q)=\frac12\,\mathrm{MMD}_k^2(p,q)$ was first studied by \citet{arbel2019mmdflow} who showed that the first variation up to normalization is

\begin{equation}\label{eq:mmd-variation}
\frac{\delta \cF^{\mathrm{MMD}}_p}{\delta q}(x)=
g^*(x):=\E_{x' \sim q}[k(x,x')]-\E_{y\sim p}[k(x,y)],
\end{equation}
so
\begin{equation}\label{eq:mmd-drift}
V^{\mathrm{MMD}}_{p,q}(x)
=
-\E_{x' \sim q}[\nabla_x k(x,x')]
+\E_{y \sim p}[\nabla_x k(x,y)].
\end{equation}
This can be approximated by the mini-batch estimator
\begin{equation}\label{eq:mmd-mc}
\widehat V_{p, q}^{\mathrm{MMD}}(x)
=\frac{1}{N^+}\sum_{i=1}^{N^+} \nabla_x k(x,y_i)-\frac{1}{N^-}\sum_{j=1}^{N^-} \nabla_x k(x,x_j).
\end{equation}
which recovers the MMD drifting approach proposed by \citet[][Appendix C.2]{deng2026drifting}, as demonstrated by \citet[][Section 4.5]{cao2026gradientflowdrifting}.
The meaning of ``kernel''  is fundamentally different to the KL case: here, it is a reproducing kernel (a positive definite function, which need not be a Parzen window), informing the smoothness of the witness function (first variation). When a characteristic kernel (which includes Gauss and Laplace kernels) is used, the MMD is zero if and only if $p=q$ \citep{SriGreFukLanetal10,SriFukLan11}. 

As a practical matter, Wasserstein gradient flow on the MMD has much better convergence behaviour when the kernel is chosen adaptively, according to the current location of the $q$ particles relative to those from $p$ \citep{galashov2025deep,chen2025deregularized}. As an alternative to adapting the kernel, a sum of kernels over different bandwidths can be used, as was done by \citeauthor{deng2026drifting} \citep[see][Section 5, for the multiple kernel learning perspective]{Cortesetal2009}.

A related perspective is the use of the MMD as a distributional loss (critic) in GAN training  \citet{li2015generative,dziugaite2015training}, and the related diffusion model of \citet{zhou25inductive}. The present discussion makes  explicit the connection with Wasserstein gradient flows.

\paragraph{Sliced-Wasserstein (SW) flow.}
Let $\mathbb{S}^{d-1}$ be the unit hypersphere in $\R^d$, and for any direction $\vartheta\in\mathbb{S}^{d-1}$, let $\pi_\vartheta(x)=x\cdot\vartheta$ denote the 1D projection of a point $x$ onto $\vartheta$. We define the squared Sliced-Wasserstein distance as
\[
\mathrm{SW}_2^2(q,p)=\int_{\mathbb{S}^{d-1}} W_2^2\big((\pi_\vartheta)_{\#} q,\;(\pi_\vartheta)_{\#} p\big)\,\dd\vartheta,
\]
where $(\pi_\vartheta)\# q$ denotes the push-forward (i.e., the 1D marginal) of the distribution $q$ along the direction $\vartheta$, and $\dd\vartheta$ represents the uniform probability measure on the sphere. We set the energy function to $\cF^{\mathrm{SW}}_p(q)\;\triangleq\;\frac12\,\mathrm{SW}_2^2(q,p)$.
\citet{cozzi2024swflow} showed that 
\begin{equation}\label{eq:sw-variation}
\frac{\delta \cF^{\mathrm{SW}}_p}{\delta q}(x)
=
\int_{\mathbb{S}^{d-1}}\varphi_\vartheta(\vartheta\cdot x)\,\dd\vartheta,\quad V^{\mathrm{SW}}_{p,q}(x)
=
-\nabla_x \frac{\delta \cF^{\mathrm{SW}}_p}{\delta q}(x)
=
\int_{\mathbb{S}^{d-1}}\big(T_\vartheta(\vartheta\cdot x)-\vartheta\cdot x\big)\,\vartheta\;\dd\vartheta,
\end{equation}
where $\varphi_\vartheta$ is the 1D Kantorovich potential and $T_\vartheta:\R \to \R$ is the 1D optimal transport map pushing $(\pi_\vartheta)_{\#} q$ to $(\pi_\vartheta)_{\#} p$. This can be  approximated using a mini-batch of real data samples and model samples, and $V^{\mathrm{SW}}_{p,q}(x)$ then approximated by uniformly sampled directions $(\vartheta^{i})_{i=1}^L$ on $\mathbb{S}^{d-1}$.

\paragraph{Dual Forms of $f$-Divergences.}
The $f$-divergences (notably Jensen-Shannon) are the standard divergence measures used in GANs. As shown by \citet{nowozin2016fgan}, critics based on $f$-divergences can be formulated using the Fenchel dual of their corresponding divergences. This dual formulation introduces an optimal witness function $g^*$ whose gradient is the Wasserstein gradient of the $f$-divergence. Building upon this, \citet{glaser2021kale} proposed the KL Approximate Lower-bound Estimator (KALE) as an alternative to the KL gradient flow discussed in Section \ref{sec:fixedpoint}. KALE operates by regularizing the dual KL divergence over a designated function class $\mathcal{H}$.
For a regularization parameter $\lambda > 0$, the estimator of \citet[][Proposition 2]{glaser2021kale} is:
\begin{equation}\label{eq:KALE}
    \cF^\text{KALE}_p(q)=  (1+\lambda) \max_{g \in \mathcal{H}} \left\{ \mathbb{E}_{y \sim q}[g(y)] - \mathbb{E}_{x \sim p}[e^{g(x)}] + 1 - \frac{\lambda}{2}\|g\|_{\mathcal{H}}^2 \right\}.
\end{equation}
This is computed via a quadratic program when $\mathcal{H}$ is a reproducing kernel Hilbert space \citep{glaser2021kale}, and by SGD on the weights $\phi$ of $g_{\phi}$, when $\mathcal{H}$ is a neural network \citep{nowozin2016fgan,galashov2025deep}.

As for the MMD, $\nabla \frac{\delta \cF_p}{\delta q}(x)= (1+\lambda) \nabla g^* $, where $g^* \in \mathcal{H}$ achieves the supremum in \eqref{eq:KALE}.
A neural network implementation of the KL gradient flow, which parameterizes the critic function with a deep architecture, is given by \citep[][Section 6 and Appendix D]{galashov2025deep}. A potential advantage of the Fenchel dual instantiation of the Wasserstein Gradient on the KL, over the flow described in Section \ref{sec:fixedpoint}, is that the former does not require kernel density estimates in its definition.  As an alternative to the KL, a gradient flow on the dual approximation to the $\chi^2$ divergence is given by \citet{chen2025deregularized}, which has closed-form updates, unlike the KALE. 

\subsection{Illustration}\label{sec:Illustration}
As a proof of concept, we ran experiments to train generators using 6 flow objectives above on 5 synthetic 2D datasets. 
We use $\tau$ to denote hyperparameters in different flow, and vary its value over a wide range to test the sensitivity and effectiveness of the different approaches.
For the KL and MMD flows, $\tau$ is the kernel bandwidth in \eqref{eq:rbf}.
For Sinkhorn-based flows, including \citet{deng2026drifting}'s Algorithm 2 (DA2) and the approximate Sinkhorn algorithm we derived, $\tau$ denotes the entropic regularization weight. The plain Sinkhorn flows used 100 Sinkhorn-Knopp iterations to evaluate the potentials. All Sinkhorn-based algorithms operate in log-probability space for numerical stability. 
For the SW flow, we varied the number of random slices $L$ from 1--100 but did not find any effect of $L$ on sample quality.
The generator network architecture is a feedforward ResNet, with parameters optimized by the Adam algorithm with learning rate $10^{-4}$, trained for $10^6$ steps. Each data-drift type combination is repeated 20 times with different random seeds for the dataset and parameter initializations.

We used the MMD$^2$ under a median-heuristic Gaussian kernel as the metric for evaluating sample quality at the end of training. The key results are shown in Figure~\ref{fig:mmd}.
Sliced-Wasserstein is omitted because its hyperparameter $L$ does not have any effect on sample quality.
All flows can attain similarly best performances at some
sweet spot of their relevant hyperparameter.
The Sinkhorn drift can tolerate a small $\tau$ and only gets worse for large $\tau$. 
Interestingly, our Sinkhorn Proxy has better tolerance to smaller values of $\tau$ compared to DA2.
Qualitatively, Figure~\ref{fig:moon_results} shows the true and the generated sample histograms for the Moons dataset. Histograms of other datasets are left in Appendix~\ref{sec:additional_results}. 
For KL, a too small noise level leads to mode-collapse common for score-based methods \citep{wenliang2020blindness}, and a too large noise level causes the learned distribution to ignore local noise structure (Figures~\ref{fig:moon_results}, \ref{fig:circles_results} and \ref{fig:gaussians_results}). The mode-collapse is not as pronounced for the Sinkhorn Proxy or DA2, which might be due to the ``hub-supression'' effect discussed in Appendix~\ref{sec:whyCloseToSinkhorn}.

\begin{figure}
    \centering
    \includegraphics[width=0.99\linewidth]{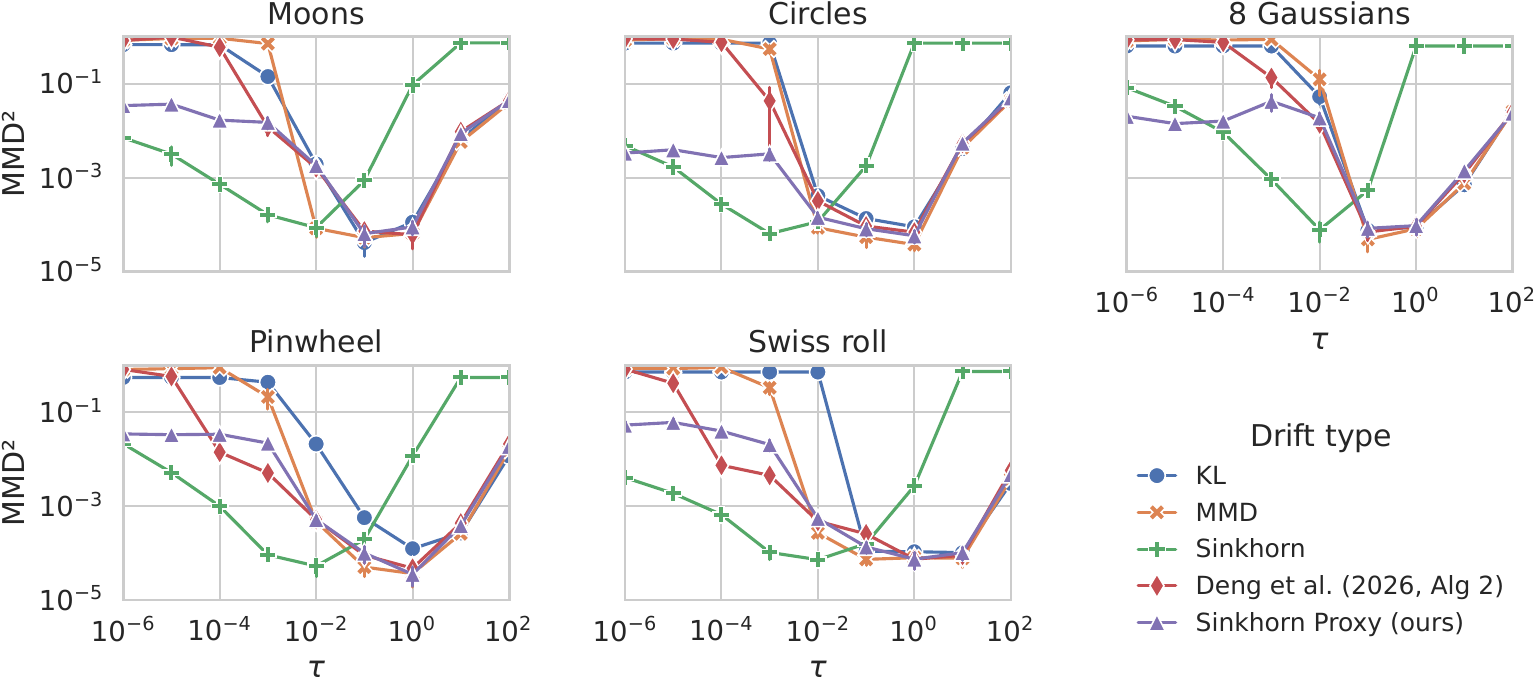}
    \caption{MMD between true and generated samples trained by different drift types.}
    \label{fig:mmd}
\end{figure}

\begin{figure}
    \centering
    \includegraphics[width=0.9\linewidth]{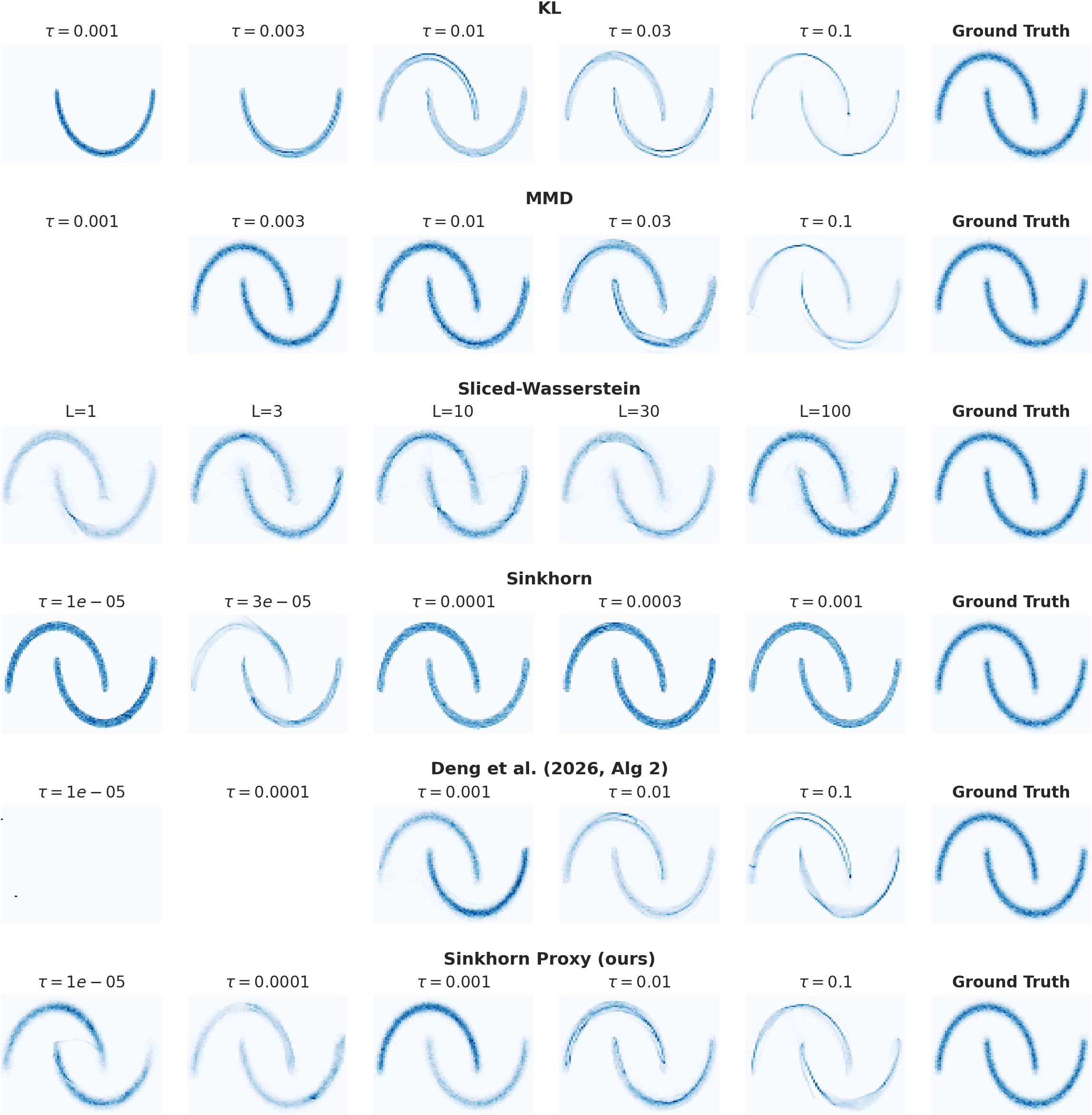}
    \caption{True and generated samples for different types of drift and hyperparameters. Empty panel means the samples have diverged.}
    \label{fig:moon_results}
\end{figure}

\section*{Acknowledgments}
The authors are grateful to Guillaume Couairon and Christopher Marouani for their helpful comments.

\bibliographystyle{apalike}
\bibliography{refs}

\appendix

\section{Wasserstein Gradient flow of KL on Parzen Window density estimates}\label{sec:parzen_smoothed_kl}

We derive the first variation and the Wasserstein gradient flow velocity for the KDE-smoothed forward KL divergence $\cF^{(\tau)}_p(q):=\textup{KL}(q_\tau \| p_\tau)$, where $q_\tau = k_\tau * q$ and $p_\tau = k_\tau * p$ are Parzen window density estimates with a symmetric kernel $k_\tau$.

The functional $\cF^{(\tau)}_p$ is a composition: $q$ first maps to $q_\tau$ via the linear convolution $q \mapsto k_\tau * q$, and then the standard KL divergence is evaluated. By \eqref{eq:firstvariationKL}, the first variation of $\textup{KL}(\cdot \| p_\tau)$ with respect to its first argument is
\begin{equation}
    \frac{\delta\, \textup{KL}(\cdot \| p_\tau)}{\delta q_\tau}(x) = \log q_\tau(x) - \log p_\tau(x) + 1.
\end{equation}
To obtain the first variation of $\cF^{(\tau)}_p$ with respect to $q$, we apply the chain rule through the convolution. For any perturbation $\chi$ with $\int \chi(y)\,\dd y = 0$, linearity gives
$k_\tau * (q + h\chi) = k_\tau * q + h\,(k_\tau * \chi) = q_\tau + h\,(k_\tau * \chi)$,
so
\begin{align}
    \left.\frac{\dd}{\dd h}\right|_{h=0} \cF^{(\tau)}_p(q + h\chi)
    &= \left.\frac{\dd}{\dd h}\right|_{h=0} \textup{KL}\!\big(q_\tau + h\,(k_\tau * \chi) \,\big\|\, p_\tau\big) \nonumber\\
    &= \int \frac{\delta\, \textup{KL}(\cdot \| p_\tau)}{\delta q_\tau}(x)\; (k_\tau * \chi)(x)\,\dd x \nonumber\\
    &= \int \Big(\log q_\tau(x) - \log p_\tau(x) + 1\Big) \int k_\tau(x,y)\,\chi(y)\,\dd y\;\dd x,
\end{align}
where the second line is the definition of first variation applied to $\textup{KL}(\cdot \| p_\tau)$ with perturbation $k_\tau * \chi$.
Exchanging the order of integration (Fubini) and using the symmetry $k_\tau(x,y) = k_\tau(y,x)$, this equals
\begin{equation}
    \int \underbrace{\left[\int k_\tau(y,x)\Big(\log q_\tau(x) - \log p_\tau(x) + 1\Big)\,\dd x\right]}_{\displaystyle = \;\frac{\delta \cF^{(\tau)}_p}{\delta q}(y)} \chi(y)\,\dd y.
\end{equation}
Because $\int k_\tau(y,x)\,\dd x$ is constant in $y$ for translation-invariant kernels, the $+1$ term contributes only a constant and vanishes under $\nabla_y$. The WGF velocity of the KDE-smoothed KL divergence is therefore
\begin{equation}\label{eq:parzen_kl_velocity}
     V^{(\tau)}_{p,q}(y) = -\nabla_y \frac{\delta \cF^{(\tau)}_p}{\delta q}(y) = -\nabla_y \int k_\tau(y,x) \log \frac{q_\tau(x)}{p_\tau(x)}\, \dd x.
\end{equation}

\section{Details of the Sinkhorn Proxy}\label{sec:sinkhornProxyDetails}

This appendix contains the full derivation and analysis of the Sinkhorn Proxy drift field summarized in \Cref{sec:sinkhorn}. We will switch from the generic notation $V_{p,q}$ to method-specific notation, such as $V^{\mathrm{KL}}_{p,q}$, $V^{\mathrm{SP}}_{p,q}$, in the following analysis to avoid ambiguity.

\subsection{Drifting Models as Fixed Point of a Wasserstein Gradient Flow on an Approximate Sinkhorn Divergence}\label{sec:sinkhornProxyDetailedIntro}

As noted in their paper, the drifting model as implemented does not use \citep[eq. (8)]{deng2026drifting} in practice, but the approach described in \citep[Algorithm 2, Appendix A]{deng2026drifting}. We show that this represents a fixed point of a probability flow on an approximation to the Sinkhorn Optimal Transport distance, where the velocity field is zero everywhere if and only if $p$ and $q$ coincide under the stated assumptions. Despite this resemblance to a Sinkhorn distance, however, the resulting velocity field is not generally conservative and therefore cannot, for generic pairs $(p,q)$ in dimension at least two, be the Wasserstein gradient of a scalar distributional loss. Its transport dynamics are in certain important respects closer to a gradient flow on the KL.
 
We begin with a brief review of the Sinkhorn distance and its associated Wasserstein gradient flow on a set of particles. Let $q,p\in \cP_2(\R^d)$ be probability measures with bounded second moment and let $C(x,y) = \tfrac{1}{2}\|x-y\|^2$. The \emph{debiased Sinkhorn divergence} \citep{Ramdas17wassersteinTest} is defined as:
\begin{equation}\label{eq:debiasedSinkhorn}
    S_\tau(q, p) = \text{OT}_\tau(q, p) - \frac{1}{2}\text{OT}_\tau(q, q) - \frac{1}{2}\text{OT}_\tau(p, p),
\end{equation}
where $\text{OT}_\tau(q, p)$ is the entropy-regularized OT distance \citep{cuturi2013sinkhorn}, and $\tau > 0$ is the entropic regularization parameter.

\propSinkhornVelocity*
\begin{proof}
By the envelope theorem \citep[][Proposition~1]{feydy2019sinkhorn}, the first variation of $S_\tau$ with respect to its first argument is the difference of optimal dual potentials,
\begin{equation}
    \frac{\delta S_\tau(q, p)}{\delta q}(x) = g_{q,p}(x) - g_{q,q}(x),
\end{equation}
where $g_{q,p}$ and $g_{q,q}$ are the optimal potentials on the $q$-side of $\textup{OT}_\tau(q,p)$ and $\textup{OT}_\tau(q,q)$, respectively. For the self term $\textup{OT}_\tau(q,q)$, the variation of $q$ affects both marginals; the derivative of $\frac12\textup{OT}_\tau(q,q)$ is $\frac12(f_{q,q}+g_{q,q})$. Since the two marginals and the cost are identical, the optimal potentials may be chosen with $f_{q,q}=g_{q,q}$ up to an additive constant, so the self-term contribution is $g_{q,q}$. The WGF velocity is therefore
\begin{equation}
    \widehat{V}^\textup{S}_{p,q}(x) = -\nabla_x \frac{\delta S_\tau(q,p)}{\delta q}(x) = -\nabla_x g_{q,p}(x) + \nabla_x g_{q,q}(x).
\end{equation}
It remains to compute these gradients. For the $g_{q,p}$ term, the joint optimal transport plan $\pi^+$ and the dual potentials $g_{q,p}$, $h_{q,p}$ are related by \citep[][Equation~9]{feydy2019sinkhorn}
\begin{align}
    \pi^+_{ij} = \frac{1}{N^+ N^-}\exp\left(\frac{g_{q,p}(x_i) + h_{q,p}(y_j) - \tfrac{1}{2}\|x_i - y_j\|^2}{\tau}\right).
\end{align}
The plan $\pi^+$ satisfies $\sum_j \pi^+_{ij} = \frac{1}{N^-}$ and $\sum_i \pi^+_{ij} = \frac{1}{N^+}$.
The potentials satisfy the coupled fixed-point equations:
\begin{align}
    g_{q,p}(x_i) &= -\tau \log \sum_{j=1}^{N^+} \frac{1}{N^+}\exp\left(\frac{h_{q,p}(y_j) - \tfrac{1}{2}\|x_i - y_j\|^2}{\tau}\right), \\
    h_{q,p}(y_j) &= -\tau \log \sum_{i=1}^{N^-} \frac{1}{N^-}\exp\left(\frac{g_{q,p}(x_i) - \tfrac{1}{2}\|x_i - y_j\|^2}{\tau}\right).
\end{align}
Taking $\nabla_{x_i}$ of the first equation, the $\frac{1}{N^+}$ factors appear in both numerator and denominator and cancel:
\begin{align}
    \nabla_{x_i} g_{q,p}(x_i)
    \;=\; -\tau \frac{\sum_{j=1}^{N^+}\frac{1}{N^+}\exp\Big(\frac{h_{q,p}(y_j) - \tfrac{1}{2}\|x_i - y_j\|^2}{\tau}\Big) \frac{-(x_i - y_j)}{\tau}}{\sum_{j'=1}^{N^+} \frac{1}{N^+}\exp\Big(\frac{h_{q,p}(y_{j'}) - \tfrac{1}{2}\|x_i - y_{j'}\|^2}{\tau}\Big)}
    \;=\; \sum_{j=1}^{N^+} W^+_{ij} (x_i - y_j),
\end{align}
where $W^+_{ij} \triangleq \pi^+_{ij}/\sum_{j'}\pi^+_{ij'} = N^- \pi^+_{ij}$ is the row-stochastic \emph{conditional} plan (with $\sum_j W^+_{ij} = 1$), and the last equality holds because the softmax ratio equals $W^+_{ij}$.
An identical calculation for the self-term gives $\nabla_{x_i} g_{q,q}(x_i) = \sum_k W^-_{ik}(x_i - x_k)$. Substituting back:
\begin{equation}\label{eq:expanded}
    \widehat{V}^\textup{S}_{p,q}(x_i) = -\sum_{j=1}^{N^+} W^+_{ij}(x_i - y_j) + \sum_{k=1}^{N^-} W^-_{ik}(x_i - x_k)  = \sum_{j=1}^{N^+} W^+_{ij}\, y_j - \sum_{k=1}^{N^-} W^-_{ik}\, x_k,
\end{equation}
since $W^+$ and $W^-$ both have unit row sums ($\sum_j W^+_{ij} = \sum_k W^-_{ik} = 1$).
\end{proof}

\subsection{The One-Shot Sinkhorn Proxy}\label{sec:oneShot} 
Equation~\eqref{eq:sinkhorn_exact} requires computing the true conditional optimal transport plans (i.e., running Sinkhorn to convergence) at every timestep. 
A drifting model using the fully converged Sinkhorn method was proposed by \cite{he2026sinkhorndrifting} and \citet[][Section 6.3, Appendix I.4]{turan2026generativedriftingsecretlyscore}.
To recover \cite[][Algorithm 2, Appendix A]{deng2026drifting} to a close approximation, we replace the exact solution with a one-shot approximation. 

Let $z^+_{im} = -C^+_{im}/\tau$ and $z^-_{im} = -C^-_{im}/\tau$ be the scaled affinities, where $i$ is an index of a $q$ sample, and $m$ may be either a $p$-sample index or a $q$-sample index, depending on whether we refer to $C^+_{ij}$ or $C^-_{ij}$, respectively.  We will focus on the case of the squared Euclidean distance,\footnote{Note that the exact Sinkhorn flow derived in Proposition~\ref{prop:sinkhorn_velocity} uses the standard optimal transport ground cost $C(x,y) = \frac{1}{2}\|x-y\|^2$. The empirical algorithm's use of the unscaled distance $\|x-y\|^2$  introduces a constant factor of $2$ into the  spatial gradient, which in practice is  absorbed by the step-size hyperparameter $\eta$ during optimization.} $C^+_{im}=\|x_i-y_m\|^2$.
We approximate the Sinkhorn solution via the geometric mean of the row-wise and column-wise softmax normalizations (with additional $N^+$ and $N^-$ scaling to ensure the population limits),\footnote{In the original paper, the  expression was
\begin{equation}
 {A}^+_{im} = \sqrt{\left(\frac{\exp(z_{im})}{\sum^{N_+ +N_-}_p \exp(z_{ip})}\right) \left(\frac{\exp(z_{im})}{\sum^{N_-}_q \exp(z_{qm})}\right)}.
\end{equation} 
We replace the sum over $N_+ +N_-$ samples with a sum over only one set of samples, for reasons that will become clear.} 
\begin{equation}
 \begin{aligned}
    {A}^+_{im} &= \sqrt{\frac{\exp(z^+_{im})}{(N^+)^{-1}\sum_{u=1}^{N^+} \exp(z^+_{iu})} \cdot \frac{\exp(z^+_{im})}{(N^-)^{-1}\sum_{v=1}^{N^-} \exp(z^+_{vm})}}\;,
    \\
    {A}^-_{im} &= \sqrt{\frac{\exp(z^-_{im})}{(N^-)^{-1}\sum_{u=1}^{N^-} \exp(z^-_{iu})} \cdot \frac{\exp(z^-_{im})}{(N^-)^{-1}\sum_{v=1}^{N^-} \exp(z^-_{vm})}}\;,
    \end{aligned}
\end{equation}
where we again use ${A}^+_{im}$ when $m$ is a $p$-sample index, and ${A}^-_{im}$ for a $q$-sample index $m$.  This serves as a pseudo-plan, used instead of the fully converged Sinkhorn solution. The relation of this approximation to the full Sinkhorn algorithm is described in Appendix \ref{sec:whyCloseToSinkhorn}.  Setting aside the scaling by $N^+$ and $N^-$, we emphasize that unlike the full Sinkhorn solution, the rows and columns of the unscaled ${A}$ are no longer guaranteed to sum to 1, and the use of the envelope theorem \citep{feydy2019sinkhorn} in obtaining the spatial gradient is no longer valid. 

Because  ${A}_{im}$ does not perfectly enforce the normalization conditions, we may not directly use ${A}^+$ and ${A}^-$  to compute the velocity field. This is addressed by a cross-weighting trick. Define
\begin{equation}
    s^+_i =  \frac{1}{N^+}  \sum_j  A^+_{ij} \quad \text{and} \quad s^-_i =  \frac{1}{N^-} \sum_k {A}^-_{ik}
\end{equation}
Writing 
\begin{equation}
\begin{aligned}
    \widetilde{W}^+_{ij} &=
    \frac{1}{N^+} {A}^+_{ij} \cdot s^-_i \\
    &=
 \frac{1}{N^+}       \left( \frac{{A}^+_{ij}}{ s_i^+ } \right)
      \times \Big( s^+_i \cdot s^-_i \Big)    \\
    &=
    \underbrace{\left( \frac{{A}^+_{ij}}{ \sum_j  A^+_{ij} } \right)}_{\text{conditional plan } (\tilde\pi^+_{ij})} \times \underbrace{\Big( s^+_i \cdot s^-_i \Big)}_{\text{scaling} (Z_i)}
\end{aligned}
\end{equation}
and defining $Z_i = s^+_i s^-_i$, then
\begin{equation}
 \begin{aligned}
    \widehat{V}^{\mathrm{SP}}_{p,q}(x_i) &= 
    - \sum_{j=1}^{N^+} \widetilde{W}^+_{ij} (x_i -  y_j) + \sum_{k=1}^{N^-} \widetilde{W}^-_{ik} (x_i - x_k ) \\
    &= - s^-_i \left[ \frac{1}{N^+} \sum_{j=1}^{N^+} A^+_{ij} (x_i - y_j) \right] + s^+_i \left[ \frac{1}{N^-} \sum_{k=1}^{N^-} A^-_{ik} (x_i - x_k) \right] \\
    &=  Z_i \left( \sum_j \tilde\pi^+_{ij} y_j - \sum_k \tilde\pi^-_{ik} x_k \right) 
 \end{aligned}
\end{equation}
which resembles \eqref{eq:expanded}, scaled by a particle-specific $Z_i$.
The drift vector $\widetilde{V}_i$ corresponds to the probability flow of the approximate Sinkhorn solution.  We emphasize  that due to the cross-weighting, the cancellation of $x_i$ in \eqref{eq:expanded} still holds.

\subsection{Alternative form, Sinkhorn proxy velocity field}\label{sec:population_counterpart}

An alternative form for the population velocity field is given in the following proposition.

\propScoreInterpretation*

\begin{proof}

Recall the original expression of the drift vector from \eqref{eq:initialSinkhornProxyDrift},
\begin{equation}
    V(x) = s^-(x) \int A^+(x,y) y \, dp(y) - s^+(x) \int A^-(x,x') x' \, dq(x'), \label{eq:drift_base}
\end{equation}
where
\begin{equation}
A^+(x,y) = \frac{k(x,y)}{\sqrt{p_\tau(x) q_\tau(y)}},  \qquad
    A^-(x,x') = \frac{k(x,x')}{\sqrt{q_\tau(x) q_\tau(x')}}. \label{eq:A_plus_minus}
\end{equation}
and
\[    s^+(x) = \int A^+(x,y) \, dp(y), \qquad
    s^-(x) = \int A^-(x,x') \, dq(x').
\]

Using the  definitions $d\tilde{p}(y) = q^{-1/2}_\tau(y) dp(y)$ and $d\tilde{q}(x) = q^{-1/2}_\tau(x) dq(x)$, we can express the row sums as kernel convolutions over these modified measures:
\begin{align}
    s^+(x) &= \frac{1}{\sqrt{p_\tau(x)}} \int \frac{ k(x,y)}{\sqrt{q_\tau(y)}} \, dp(y) = \frac{1}{\sqrt{p_\tau(x)}} (k * \tilde{p})(x), \label{eq:s_plus} \\
    s^-(x) &= \frac{1}{\sqrt{q_\tau(x)}} \int  \frac{k(x,x')}{\sqrt{q_\tau(x')}} \, dq(x') = \frac{1}{\sqrt{q_\tau(x)}} (k * \tilde{q})(x). \label{eq:s_minus}
\end{align}

Next, we replace the proxy affinities $A^+$ and $A^-$ inside the expected destination integrals in \eqref{eq:drift_base}, yielding:
\begin{equation}
    V(x) = \frac{s^-(x)}{\sqrt{p_\tau(x)}} \int y k(x,y) d\tilde{p}(y)  - \frac{s^+(x)}{\sqrt{q_\tau(x)}} \int x' k(x,x') d\tilde{q}(x') . \label{eq:drift_sub}
\end{equation}

To evaluate these spatial integrals, we use Tweedie's formula for the unnormalized Gaussian kernel $k(x,z) = \exp(-\|x-z\|^2/\tau)$, which for a measure $\mu(z)$, yields
\[
\frac{\int z k(x,z) d\mu(z) }{\int k(x,z) d\mu(z) } = x + \frac{\tau}{2} \nabla_x \log(k * \mu)(x).
\]
Multiplying both sides by the denominator $\int k(x,z) d\mu(z)  = (k * \mu)(x)$ yields:
\[
\int z k(x,z) d\mu(z)  = (k * \mu)(x) \left[ x + \frac{\tau}{2} \nabla_x \log(k * \mu)(x) \right].
\]

We apply this result to  \eqref{eq:drift_sub} for both modified measures $\tilde{p}$ and $\tilde{q}$. From our row sum derivations in \eqref{eq:s_plus} and \eqref{eq:s_minus}, we have $(k * \tilde{p})(x) = \sqrt{p_\tau(x)} s^+(x)$ and $(k * \tilde{q})(x) = \sqrt{q_\tau(x)} s^-(x)$. Substituting these relations into the right-hand side of Tweedie's formula gives:
\begin{align*}
    \int y k(x,y) d\tilde{p}(y)  &= \sqrt{p_\tau(x)} s^+(x) \left[ x + \frac{\tau}{2} \nabla_x \log(k * \tilde{p})(x) \right], \\
    \int x' k(x,x') d\tilde{q}(x')  &= \sqrt{q_\tau(x)} s^-(x) \left[ x + \frac{\tau}{2} \nabla_x \log(k * \tilde{q})(x) \right].
\end{align*}

Finally, we substitute these evaluated integrals back into the drift vector expression \eqref{eq:drift_sub}:
\begin{align*}
    V(x) &= \frac{s^-(x)}{\sqrt{p_\tau(x)}} \Bigg( \sqrt{p_\tau(x)} s^+(x) \left[ x + \frac{\tau}{2} \nabla_x \log(k * \tilde{p})(x) \right] \Bigg) \\
    &\quad - \frac{s^+(x)}{\sqrt{q_\tau(x)}} \Bigg( \sqrt{q_\tau(x)} s^-(x) \left[ x + \frac{\tau}{2} \nabla_x \log(k * \tilde{q})(x) \right] \Bigg) \\
    &= 
 s^-(x)s^+(x) \left[ x + \frac{\tau}{2} \nabla_x \log(k * \tilde{p})(x) \right] - s^+(x)s^-(x) \left[ x + \frac{\tau}{2} \nabla_x \log(k * \tilde{q})(x) \right].
\end{align*}

Defining $Z(x) = s^+(x)s^-(x)$ and expanding the brackets, the  coordinates $x$ cancel:
\begin{align*}
    V(x) 
    &= \frac{\tau}{2} Z(x) \big( \nabla_x \log(k * \tilde{p})(x) - \nabla_x \log(k * \tilde{q})(x) \big).
\end{align*}
\end{proof}

\subsection{Consistency of the Sinkhorn Proxy}\label{sec:zeroIFFPisQ}
\thmConsistency*

\begin{proof}
    We analyze the condition under which the cross-weighted vector field $V_{p,q}^\mathrm{SP}(x)$ vanishes everywhere.
    \vspace{1em}
    \noindent \textbf{($\impliedby$)} Assume $p = q$. It follows that the modified measures are identical:
    \[
    d\tilde{p}(y) = q^{-1/2}_\tau(y) dp(y) =  q^{-1/2}_\tau(x) dq(x) = d\tilde{q}(y) \quad \forall y.
    \]
    Consequently, their convolutions with the kernel $k$ are identical, yielding $\nabla_x \log(k * \tilde{p})(x) = \nabla_x \log(k * \tilde{q})(x)$ for all $x$, and the velocity field is identically zero everywhere.
    
    \vspace{1em}
    \noindent \textbf{($\implies$)} Assume $V_{p,q}^\mathrm{SP}(x) = 0$ for all $x$. The pre-conditioner $Z(x)$ is strictly positive everywhere (as the Gaussian $k$ is supported on $\mathbb{R}^d$), hence we must have:
    \[
    \nabla_x \log(k * \tilde{p})(x) = \nabla_x \log(k * \tilde{q})(x) \quad \forall x.
    \]
    Integrating this spatial gradient implies that the kernel-smoothed measures are proportional up to a global constant $\lambda > 0$:
    \[
    (k * \tilde{p})(x) = \lambda (k * \tilde{q})(x) \quad \forall x.
    \]
    When $k$ is characteristic \citep{SriGreFukLanetal10} and translation invariant on $\mathbb{R}^d$, then the mean embedding map $\mu \mapsto k * \mu$ is injective for finite signed measures \citep[][Proposition 2 and Section 3.2]{SriFukLan11}. This property holds for the Gaussian kernel, since its Fourier transform is everywhere non-zero. Assuming that the modified measures $\tilde{p}$ and $\tilde{q}$ have finite mass, then injectivity applies. Therefore, if the smoothed measures are proportional, the underlying modified measures must themselves be  proportional:
    \[
  d  \tilde{p}(y) = \lambda  d \tilde{q}(y) \quad \forall y.
    \]
    Substituting the definitions of $\tilde{p}$ and $\tilde{q}$ yields:
    \[
   q^{-1/2}_\tau(y) dp(y) = \lambda q^{-1/2}_\tau(x) dq(x) 
    \]
    Because the kernel-smoothed model density is strictly positive ($q_\tau(y) > 0$), then
    \[
  d  p(y) = \lambda d q(y) \quad \forall y.
    \]
    Since $p$ and $q$ are both valid probability density functions representing the target data distribution and the model distribution, respectively, they must integrate to 1,
    \[
    \int p(y) \, dy = \lambda \int q(y) \, dy  \implies \lambda = 1.
    \]
\end{proof}

\subsection{The Sinkhorn Proxy Velocity does not generally correspond to a Wasserstein Gradient Flow}\label{sec:curlProof}
\propCurl*
\begin{proof}
To be a valid Wasserstein gradient flow, the associated velocity vector field $V^{\mathrm{SP}}_{p,q}(x)$ driving the particles must take the form of a pure gradient field (see \eqref{eq:continuity}). Thus, there must exist a scalar function (the first variation $\frac{\delta \mathcal{F}_p}{\delta q}(x)$) whose gradient produces this field. 

We start with the form in \cref{thm:scoreInterpretationDrifting}. Define the term inside the parenthesis as 
\[ \Phi(x) = \frac{\tau}{2} \Big( \log(k * \tilde{p})(x) - \log(k * \tilde{q})(x) \Big). \]
We can then rewrite the Sinkhorn Proxy drift concisely as:
\[ V^{\mathrm{SP}}_{p,q}(x) = Z(x) \nabla_x \Phi(x). \]

By Poincar\'e's Lemma, a continuously differentiable vector field on a simply connected domain is a pure gradient field if and only if its Jacobian matrix is symmetric. Let $V_i$ denote the $i$-th component of $V^{\mathrm{SP}}_{p,q}$. The symmetry condition $\partial_j V_i = \partial_i V_j$ implies:
\begin{align}
\partial_j \big(Z(x) \partial_i \Phi(x)\big) &= \partial_i \big(Z(x) \partial_j \Phi(x)\big) \\
\partial_j Z(x) \partial_i \Phi(x) + Z(x) \partial_j \partial_i \Phi(x) &= \partial_i Z(x) \partial_j \Phi(x) + Z(x) \partial_i \partial_j \Phi(x).
\end{align}
By Schwarz's theorem, the mixed partial derivatives of $\Phi$ are equal ($\partial_j \partial_i \Phi = \partial_i \partial_j \Phi$), so
\begin{equation}
\partial_j Z(x) \partial_i \Phi(x) = \partial_i Z(x) \partial_j \Phi(x).
\end{equation}
In other words, the gradient of the scalar pre-conditioner $\nabla Z(x)$ must be perfectly parallel to the gradient of the score difference $\nabla \Phi(x)$ at all points in space. Given their distinct definitions, this geometric condition will not hold in general.
\end{proof}

\section{Geometric mean as approximation to Sinkhorn solution}
\label{sec:whyCloseToSinkhorn}

We demonstrate the relation between ${A}^+$  and the optimal transport plan obtained via Sinkhorn iteration.

In entropy-regularized optimal transport \citep{cuturi2013sinkhorn}, the interaction between points is governed by a strictly positive Gibbs kernel matrix $K$. The algorithm defines the independent pairwise logits as $Z^+_{ij} = -C^+_{ij} / \tau$. Exponentiating these logits yields  the Gibbs kernel:
\begin{equation}
    K_{ij} = \exp(Z^+_{ij}) = \exp\left(-\frac{C^+_{ij}}{\tau}\right)
\end{equation}

The objective of the Sinkhorn algorithm is to find the strictly positive vectors $u$ and $v$ such that the resulting transport plan $P = \text{diag}(u) K \text{diag}(v)$ satisfies target marginal constraints.  To recover the Sinkhorn proxy algorithm, these marginals are assumed to be {\em uniform},
\[
( P \mathbf{1} )^\top  = \mathbf{1} ^\top P  = \mathbf{1}.
\]
The resulting optimal transport plan $P^*$ takes the form
\begin{equation} \label{eq:sinkhorn_struct}
    P^*_{ij} = u_i K_{ij} v_j
\end{equation}
 The standard iterative Sinkhorn algorithm finds $u$ and $v$ by repeatedly alternating between dividing by row sums and column sums until the marginals match the target distributions.

Assume first that Sinkhorn initializes the column scaling vector as a vector of ones: $v^{(0)} = \mathbf{1}$. The first update for the source scaling vector $u$ is computed by dividing the target marginals by the row sums of the current scaled kernel,
\begin{equation}\label{eq:sink1}
    u^{(1)}_i = \frac{1}{(K v^{(0)})_i} = \frac{1}{(K \mathbf{1})_i} = \frac{1}{r_i}
\end{equation}
Applying this scaling alone yields $u^{(1)}_i K_{ij} = K_{ij}/r_i$, which corresponds exactly to the \texttt{softmax\_row} operation. 
 Consider now the case that we begin the Sinkhorn iterations with ($u^{(0)} = \mathbf{1}$). We would then make the update
\begin{equation}\label{eq:sink2}
    v^{(1)}_j = \frac{1}{(K^T u^{(0)})_j} = \frac{1}{(K^T \mathbf{1})_j} = \frac{1}{c_j}.
\end{equation}

Turning now to the approach of \citet[][Algorithm 2]{deng2026drifting}, we  examine the relation between the matrix ${A}^+$ and the Sinkhorn structure.
First, the row-wise softmax of the logits $Z^+$ is  equivalent to the Gibbs kernel divided by its row sums. Let $r_i = \sum_m K_{im}$ be the $i$-th row sum:
\begin{equation}
    \text{softmax}_{\text{row}}(Z^+)_{ij} = \frac{\exp(Z^+_{ij})}{\sum_m \exp(Z^+_{im})} = \frac{K_{ij}}{r_i}
\end{equation}
Similarly, the column-wise softmax is the Gibbs kernel divided by its column sums. Let $c_j = \sum_n K_{nj}$ be the $j$-th column sum:
\begin{equation}
    \text{softmax}_{\text{col}}(Z^+)_{ij} = \frac{\exp(Z^+_{ij})}{\sum_n \exp(Z^+_{nj})} = \frac{K_{ij}}{c_j}
\end{equation}
The update of \citeauthor{deng2026drifting} corresponds to a geometric mean of these normalization operations,\footnote{In \eqref{eq:one_shot} there is additional scaling by $N^+$ and $N^-$, which we omit here - see discussion in main text.}
\begin{equation}
    {A}^+_{ij} = \frac{K_{ij}}{\sqrt{r_i \cdot c_j}}
\end{equation}
rather than iterating between them. 
When $N^+=N^-$, then repeated iteration of this procedure  converges to the entropy-regularised transport map  \citep[][Theorem 3.1]{knight2014symmetry}, as with the better known  Sinkhorn approach that alternates row and column normalization.

While the above correction does not guarantee the marginal constraints of a fully converged Sinkhorn approach (which would require infinite sequential iterations), it does capture the property of ``hub suppression:''
If a target $y_j$ is a hub (a region of high density surrounded by many source points), its column sum $c_j$ will be large. Consequently, the column scaling factor $ 1/\sqrt{c_j}$ becomes  small, which down-scales the affinity of  particles to that hub.

\section{Failure mode on a synthetic example}
\label{sec:failureSyntheticProofs}

We first describe the velocity field  of the KL-KDE in the regime where $D,\alpha,\beta$ are fixed and $\tau\downarrow0$.

\KLfailuremode*

\begin{proof}
The continuous KL-KDE drift evaluates the difference between the scores of the kernel-smoothed distributions: 
\[ V_{p,q}^\textup{KL}(x)=\frac{\tau}{2}\big(\nabla_x \log p_{\tau}(x)-\nabla_x \log q_{\tau}(x)\big). \]

We evaluate the score of the target distribution $p_{\tau}$ at $x=+D$. The smoothed density is
\[ p_{\tau}(D) = (1-\alpha)k(D,D) + \alpha k(D,-D) = (1-\alpha) + \alpha\epsilon. \]
The gradient of the unnormalized Gaussian kernel $k(x,z)=\exp(-\|x-z\|^2/\tau)$ is $\nabla_x k(x,z) = -\frac{2(x-z)}{\tau} k(x,z)$. Evaluating this at $x=+D$ gives $\nabla_x k(D,D)=0$ and $\nabla_x k(D,-D)=-\frac{4D}{\tau}\epsilon$. The  gradient of the smoothed target distribution is
\[ \nabla_x p_{\tau}(D) = (1-\alpha)\nabla_x k(D,D) + \alpha \nabla_x k(D,-D) = \alpha\left(-\frac{4D}{\tau}\epsilon\right). \]
The score function is the ratio of these two quantities:
\[ \nabla_x \log p_{\tau}(D) = \frac{\nabla_x p_{\tau}(D)}{p_{\tau}(D)} = \frac{-\frac{4D}{\tau}\alpha\epsilon}{(1-\alpha)+\alpha\epsilon}. \]

Assuming the mixing proportion $\alpha \neq 1$), 
\[ \frac{1}{(1-\alpha)+\alpha\epsilon} = \frac{1}{1-\alpha} \left( \frac{1}{1 + \frac{\alpha}{1-\alpha}\epsilon} \right). \]
Expanding $(1+z)^{-1} = 1 - z + \mathcal{O}(z^2)$ for small $z$, 
\begin{align*}
\nabla_x \log p_{\tau}(D) &= \left(-\frac{4D}{\tau}\alpha\epsilon\right) \frac{1}{1-\alpha} \left(1 - \frac{\alpha}{1-\alpha}\epsilon + \mathcal{O}(\epsilon^2)\right) \\
&= -\frac{4D}{\tau}\epsilon\left(\frac{\alpha}{1-\alpha}\right) + \mathcal{O}\left(\frac{\epsilon^2}{\tau}\right).
\end{align*}

To verify that the remainder  $\mathcal{O}(\frac{\epsilon^2}{\tau})$ is indeed $o(\epsilon)$, we must show that $\lim_{\tau \to 0} \frac{\epsilon^2/\tau}{\epsilon} = 0$. Using the fact that $\epsilon = \exp(-4D^2/\tau)$, we can express $\tau$ as $\tau = -\frac{4D^2}{\ln \epsilon}$. Then:
\begin{equation}\label{eq:whyEpsSqTauVanish}
\lim_{\epsilon \to 0} \frac{\epsilon^2/\tau}{\epsilon} = \lim_{\epsilon \to 0} \frac{\epsilon}{\tau} = \lim_{\epsilon \to 0} \frac{-\epsilon \ln \epsilon}{4D^2} = 0.
\end{equation}
Since the limit is  zero, the remainder  decays  faster than $\epsilon$. Therefore, $\mathcal{O}(\frac{\epsilon^2}{\tau}) \subset o(\epsilon)$, and thus,
\begin{align*}
\frac{\tau}{2} \nabla_x \log p_{\tau}(D) &= \frac{\tau}{2} \left[ -\frac{4D}{\tau}\epsilon\left(\frac{\alpha}{1-\alpha}\right) + \mathcal{O}\left(\frac{\epsilon^2}{\tau}\right) \right] \\
&= -2D\epsilon\left(\frac{\alpha}{1-\alpha}\right) +  o(\epsilon).
\end{align*}

Applying the same reasoning to the model distribution $q_{\tau}$, with mixing proportion $\beta$, yields:
\[ \frac{\tau}{2} \nabla_x \log q_{\tau}(D) = -2D\epsilon\left(\frac{\beta}{1-\beta}\right) + o(\epsilon). \]

Subtracting the two scaled scores gives the net KL-KDE drift:
\begin{align*}
V_{p,q}^\textup{KL}(D) &= \frac{\tau}{2} \nabla_x \log p_{\tau}(D) - \frac{\tau}{2} \nabla_x \log q_{\tau}(D) \\
&= \left[ -2D\epsilon\left(\frac{\alpha}{1-\alpha}\right) + o(\epsilon) \right] - \left[ -2D\epsilon\left(\frac{\beta}{1-\beta}\right) + o(\epsilon) \right] \\
&= -2D\epsilon\left(\frac{\alpha}{1-\alpha} - \frac{\beta}{1-\beta}\right) + o(\epsilon).
\end{align*}

\end{proof}

The drift points in the correct direction (negative, since we assume $\beta < \alpha$), but its magnitude is scaled by $\epsilon$. Because $\tau$ is narrow relative to the mode separation $2D$, $\epsilon$ may drop below machine precision ($\epsilon \approx 0$).
We now show that the Sinkhorn proxy method has the  same failure mode.

\propFailureMode*

\begin{proof}
We begin with the population-limit velocity field established in \cref{thm:scoreInterpretationDrifting},
\[ V_{p,q}^\mathrm{SP}(x)=\frac{\tau}{2}Z(x)\big(\nabla_x \log(k*\tilde{p})(x)-\nabla_x \log(k*\tilde{q})(x)\big) , \]
where  $d\tilde{p}(y) = q^{-1/2}_\tau(y) dp(y)$ and $d\tilde{q}(x) = q^{-1/2}_\tau(x) dq(x)$. We evaluate the algorithmic drift $V_{p,q}^\mathrm{SP}(x)$ at the right-hand overcrowded mode $x=+D$.

First, we evaluate the kernel-smoothed model density $q_{\tau}$  at both modes, expressing the result as $1+\mathcal{O}(\epsilon)$ perturbations:
\begin{align*}
q_{\tau}(D) &= \beta k(D,-D)+(1-\beta)k(D,D)=(1-\beta)+\beta\epsilon=(1-\beta)(1+\mathcal{O}(\epsilon)) \\
q_{\tau}(-D) &= \beta k(-D,-D)+(1-\beta)k(-D,D)=\beta+(1-\beta)\epsilon=\beta(1+\mathcal{O}(\epsilon))
\end{align*}

Using the Taylor expansion $(1+\mathcal{O}(\epsilon))^{-1/2} = 1+\mathcal{O}(\epsilon)$, we substitute these into the modified distributions $\tilde{p}$ and $\tilde{q}$:
\begin{align*}
d\tilde{p}(y) &= \frac{\alpha}{\sqrt{q_{\tau}(-D)}}\delta_{-D}(y)+\frac{1-\alpha}{\sqrt{q_{\tau}(D)}}\delta_{+D}(y) = \left(\frac{\alpha}{\sqrt{\beta}}+\mathcal{O}(\epsilon)\right)\delta_{-D}(y)+\left(\frac{1-\alpha}{\sqrt{1-\beta}}+\mathcal{O}(\epsilon)\right)\delta_{+D}(y) \\
d\tilde{q}(x) &= \frac{\beta}{\sqrt{q_{\tau}(-D)}}\delta_{-D}(x)+\frac{1-\beta}{\sqrt{q_{\tau}(D)}}\delta_{+D}(x) = \left(\sqrt{\beta}+\mathcal{O}(\epsilon)\right)\delta_{-D}(x)+\left(\sqrt{1-\beta}+\mathcal{O}(\epsilon)\right)\delta_{+D}(x)
\end{align*}

Next, we evaluate the kernel-smoothed modified measures at $x=+D$. Since $k(D,-D)=\epsilon$ and $k(D,D)=1$:
\begin{equation}\label{eq:blurredTildePQ}
\begin{aligned}
(k*\tilde{p})(D) &= \left(\frac{\alpha}{\sqrt{\beta}}+\mathcal{O}(\epsilon)\right)\epsilon+\left(\frac{1-\alpha}{\sqrt{1-\beta}}+\mathcal{O}(\epsilon)\right)(1) = \frac{1-\alpha}{\sqrt{1-\beta}}+\mathcal{O}(\epsilon) \\
(k*\tilde{q})(D) &= \left(\sqrt{\beta}+\mathcal{O}(\epsilon)\right)\epsilon+\left(\sqrt{1-\beta}+\mathcal{O}(\epsilon)\right)(1) = \sqrt{1-\beta}+\mathcal{O}(\epsilon)
\end{aligned}
\end{equation}
To compute the score functions, we evaluate the  gradients at $x=+D$. For the unnormalized Gaussian kernel $k(x,z)=\exp(-\|x-z\|^2/\tau)$, the  gradients are $\nabla_x k(x,-D)|_{x=D}=-\frac{4D}{\tau}\epsilon$ and $\nabla_x k(x,D)|_{x=D}=0$. This yields:
\begin{align*}
\nabla_x(k*\tilde{p})(D) &= \left(\frac{\alpha}{\sqrt{\beta}}+\mathcal{O}(\epsilon)\right)\left(-\frac{4D}{\tau}\epsilon\right)+\left(\frac{1-\alpha}{\sqrt{1-\beta}}+\mathcal{O}(\epsilon)\right)(0) = -\frac{4D}{\tau}\epsilon\frac{\alpha}{\sqrt{\beta}}+\mathcal{O}\left(\frac{\epsilon^2}{\tau}\right) \\
\nabla_x(k*\tilde{q})(D) &= \left(\sqrt{\beta}+\mathcal{O}(\epsilon)\right)\left(-\frac{4D}{\tau}\epsilon\right)+\left(\sqrt{1-\beta}+\mathcal{O}(\epsilon)\right)(0) = -\frac{4D}{\tau}\epsilon\sqrt{\beta}+\mathcal{O}\left(\frac{\epsilon^2}{\tau}\right)
\end{align*}

The score functions at $x=+D$ are then the ratio of the gradients to the densities. Using the quotient Taylor expansion $\frac{A\epsilon + \mathcal{O}(\epsilon^2/\tau)}{B + \mathcal{O}(\epsilon)} = \frac{A}{B}\epsilon + \mathcal{O}\left(\frac{\epsilon^2}{\tau}\right)$ and  that $\mathcal{O}(\epsilon^2/\tau) = o(\epsilon)$ from \eqref{eq:whyEpsSqTauVanish},
\begin{align*}
\nabla_x \log(k*\tilde{p})(D) &= \frac{-\frac{4D}{\tau}\epsilon\frac{\alpha}{\sqrt{\beta}}+\mathcal{O}\left(\frac{\epsilon^2}{\tau}\right)}{\frac{1-\alpha}{\sqrt{1-\beta}}+\mathcal{O}(\epsilon)} = -\frac{4D}{\tau}\epsilon\left(\frac{\alpha}{1-\alpha}\right)\sqrt{\frac{1-\beta}{\beta}}+o(\epsilon) \\
\nabla_x \log(k*\tilde{q})(D) &= \frac{-\frac{4D}{\tau}\epsilon\sqrt{\beta}+\mathcal{O}\left(\frac{\epsilon^2}{\tau}\right)}{\sqrt{1-\beta}+\mathcal{O}(\epsilon)} = -\frac{4D}{\tau}\epsilon\sqrt{\frac{\beta}{1-\beta}}+o(\epsilon)
\end{align*}
Thus
\begin{equation}
\frac{\tau}{2} \left( \nabla_x \log(k*\tilde{p})(D) - \nabla_x \log(k*\tilde{q})(D) \right)
=
-2D \epsilon 
\sqrt{\frac{1-\beta}{\beta}}
\left[
\frac{\alpha}{1-\alpha} 
-
\frac{\beta}{1-\beta}
\right] + \frac{\tau}{2} o(\epsilon) 
\end{equation}

Finally, we consider the scaling term 
\begin{equation}
Z(D) = s^+(D)s^-(D) 
=
\frac{1}{\sqrt{p_\tau(D)}}(k*\tilde{p})(D) \cdot
\frac{1}{\sqrt{q_\tau(D)}}(k*\tilde{q})(D)
\end{equation}
We have
\begin{equation}
\frac{1}{\sqrt{p_\tau(D)}} = \frac{1}{\sqrt{1-\alpha}} + \mathcal{O}(\epsilon), 
\qquad
\frac{1}{\sqrt{q_\tau(D)}} = \frac{1}{\sqrt{1-\beta}} + \mathcal{O}(\epsilon),
\end{equation}
and thus, from previous results \eqref{eq:blurredTildePQ} on $(k*\tilde{p})(D)$ and $(k*\tilde{q})(D)$, that
\begin{equation}
s^+(D) = \frac{\sqrt{1-\alpha}}{\sqrt{1-\beta}} + \mathcal{O}(\epsilon),
\qquad
s^-(D) = 1 + \mathcal{O}(\epsilon). 
\end{equation}
It follows that 
\begin{equation}
Z(D) = \frac{\sqrt{1-\alpha}}{\sqrt{1-\beta}} + \mathcal{O}(\epsilon).
\end{equation}

Substituting these into the  drift equation gives:
\begin{equation}
V_{p,q}^\mathrm{SP}(D) 
=
-2D\epsilon\left[\sqrt{\frac{1-\alpha}{\beta}}\right]\left(\frac{\alpha}{1-\alpha}-\frac{\beta}{1-\beta}\right)+o(\epsilon).
\end{equation}
\end{proof}

This confirms that in the present setting, despite its superficial resemblance to a Sinkhorn flow, the Sinkhorn Proxy algorithm does not share the mass transport dynamics of an Earth Mover's Distance, but behaves more like a flow on KL. We suspect that \citeauthor{deng2026drifting}'s Algorithm 2 also has this problem.

\trueWassersteinGF*
\begin{proof}
 We compute the barycentric velocity at $x = +D$ for the Wasserstein gradient flow of $\frac12 W_2^2(q,p)$. Because the measures are atomic and the optimal plan can split the mass at $+D$, this should be understood as the barycentric projection of the optimal plan, or as the limit obtained by replacing the atoms by small absolutely continuous clusters. As defined in the main text, the target data distribution $p$ and the model distribution $q$ are mixtures of two isolated point masses:
\begin{align}
    p(y) &= \alpha \updelta_{-D}(y) + (1-\alpha) \updelta_{+D}(y) , \\
    q(x) &= \beta \updelta_{-D}(x) + (1-\beta) \updelta_{+D}(x) . 
\end{align}
We assume $\beta < \alpha$. This means the model $q$ is ``starved'' of mass at the left mode ($-D$) and ``overcrowded'' at the right mode ($+D$).

To find the Wasserstein-2 velocity, we must first find the  optimal transport plan $\pi(x, y)$ that minimizes the quadratic cost $C(x, y) = \frac{1}{2}\|x - y\|^2$ to move mass from $q$ to $p$.
The joint distribution $\pi(x, y)$ must satisfy the following marginal constraints:
\begin{equation}
    \sum_y \pi(x, y) = q(x), \qquad \sum_x \pi(x, y) = p(y) .
\end{equation}

To minimize the cost, mass should remain in place wherever possible (cost = 0).
\begin{itemize}
    \item All mass $\beta$ at $x = -D$ in $q$ will stay at $y = -D$ in $p$. Thus, $\pi(-D, -D) = \beta$.
    \item Because the target $p$ requires a total mass of $\alpha$ at $-D$, the remaining deficit of $(\alpha - \beta)$ must be transported from $x = +D$. Thus, $\pi(+D, -D) = \alpha - \beta$.
    \item The remaining mass at $x = +D$ stays at $y = +D$. Thus, $\pi(+D, +D) = (1-\beta) - (\alpha - \beta) = 1 - \alpha$.
    \item Finally, to obtain that the probability sum at one, we have $\pi(-D,+D) = 0$.
\end{itemize}

For probability mass at $x = +D$, the Wasserstein gradient flow velocity is
\begin{equation} \label{eq:expected_gradient}
    V^{W_2}_{p,q}(x) = -\int \nabla_x C(x,y) \, d\pi(y|x),  
\end{equation}
with the expectation over the optimal \textit{conditional} transport plan $d\pi(y|x)$ \citep{ambrosio2008gradient}.

We compute the conditional probability $\pi(y \mid x) = \frac{\pi(x, y)}{q(x)}$ for the overcrowded mode $x = +D$:

The total available mass at $x = +D$ is $q(+D) = 1 - \beta$.
\begin{itemize}
    \item Fraction of mass moving to $-D$: $\pi(-D \mid +D) = \frac{\alpha - \beta}{1 - \beta}$
    \item Fraction of mass staying at $+D$: $\pi(+D \mid +D) = \frac{1 - \alpha}{1 - \beta}$
\end{itemize}
The velocity vector $V^{W_2}_{p,q}$ for the mass at $x$ is the difference between its expected target destination and its current position:
\begin{equation}
    V^{W_2}_{p,q}(x) = \mathbb{E}_{Y \sim \pi(\cdot \mid x)}[Y] - x
\end{equation}
Evaluating this at $x = +D$:
\begin{equation}
    V^{W_2}_{p,q}(+D) = \left[ \pi(-D \mid +D)(-D) + \pi(+D \mid +D)(+D) \right] - D.
\end{equation}
Substituting the conditional probabilities,
\begin{align}
   V^{W_2}_{p,q}(+D) &= \left[ \left( \frac{\alpha - \beta}{1 - \beta} \right)(-D) + \left( \frac{1 - \alpha}{1 - \beta} \right)(+D) \right] - D \\
   &=  D \left[ \frac{-(\alpha - \beta) + (1 - \alpha) - (1 - \beta)}{1 - \beta} \right] \\
    &= -2D \left( \frac{\alpha - \beta}{1 - \beta} \right)
\end{align}
which concludes the proof.

\end{proof}

\section{Proof of Proposition~\ref{prop:loss_gradient}}
\label{sec:loss_gradient_proof}

\propLossGradient*
\begin{proof}
Write $x = f_\theta(\epsilon)$ and let $t = \sg(x + \eta V_{p,q_\theta}(x))$ denote the stop-gradient target, which is treated as a constant with respect to $\theta$. Then $\mathcal{L}(\theta) = \E_{\epsilon \sim \mu}[\|f_\theta(\epsilon) - t\|^2]$, and applying the chain rule yields:
\begin{align}
    \nabla_\theta \mathcal{L}(\theta) 
    &= \E_{\epsilon \sim \mu}\Big[2\big(f_\theta(\epsilon) - t\big)^\top \nabla_\theta f_\theta(\epsilon)\Big] \nonumber\\
    &= \E_{\epsilon \sim \mu}\Big[2\big(f_\theta(\epsilon) - f_\theta(\epsilon) - \eta V_{p,q_\theta}(f_\theta(\epsilon))\big)^\top \nabla_\theta f_\theta(\epsilon)\Big] \nonumber\\
    &= -2\eta\,\E_{\epsilon \sim \mu}\Big[V_{p,q_\theta}\big(f_\theta(\epsilon)\big)^\top \nabla_\theta f_\theta(\epsilon)\Big].
\end{align}
A perturbation $\delta\theta$ changes the functional through the marginal distribution $q_\theta$. By the definition of the first variation:
\begin{equation}
    \nabla_\theta \cF_p(q_\theta) = \int \frac{\delta \cF_p}{\delta q_\theta}(x)\, \nabla_\theta q_\theta(x)\, \dd x.
\end{equation}
Since $q_\theta = (f_\theta)_\# \mu$, the parameter perturbation $\delta\theta$ induces a displacement field on the particles in data space. Because multiple noise instances $\epsilon$ could map to the same spatial point $x$ (if $f_\theta$ is non-invertible), we define the effective Eulerian velocity field at spatial location $x$ via the conditional expectation:
\begin{equation}
    v_\theta(x) = \E_{\epsilon \sim \mu}\big[\nabla_\theta f_\theta(\epsilon) \mid f_\theta(\epsilon) = x\big].
\end{equation}
The evolution of the density $q_\theta$ with respect to $\theta$ is mathematically governed by the continuity equation $\nabla_\theta q_\theta(x) = -\nabla_x \cdot (q_\theta(x) v_\theta(x))$. Substituting this into the first variation and applying the divergence theorem (assuming boundary terms vanish) gives:
\begin{align}
    \nabla_\theta \cF_p(q_\theta) &= -\int \frac{\delta \cF_p}{\delta q_\theta}(x)\, \nabla_x \cdot \big(q_\theta(x)\, v_\theta(x)\big)\, \dd x \nonumber\\
    &= \int \nabla_x \frac{\delta \cF_p}{\delta q_\theta}(x)^\top v_\theta(x) \; q_\theta(x)\, \dd x.
\end{align}
Using the law of total expectation (the tower property), we can rewrite this integral over the spatial domain $\mathbb{R}^d$ as an expectation directly over the latent noise distribution $\mu$:
\begin{align}
    \nabla_\theta \cF_p(q_\theta) 
    &= \E_{x \sim q_\theta}\Big[ \nabla_x \frac{\delta \cF_p}{\delta q_\theta}(x)^\top \E_{\epsilon \sim \mu}\big[\nabla_\theta f_\theta(\epsilon) \mid f_\theta(\epsilon) = x\big] \Big] \nonumber\\
    &= \E_{\epsilon\sim\mu}\Big[\nabla_x \frac{\delta \cF_p}{\delta q_\theta}\big(f_\theta(\epsilon)\big)^\top \nabla_\theta f_\theta(\epsilon)\Big]. \label{eq:functional_chain_rule}
\end{align}

Finally, substituting the definition of the Wasserstein gradient $V_{p,q_\theta}(x) = -\nabla_x \frac{\delta \cF_p}{\delta q_\theta}(x)$ into \eqref{eq:functional_chain_rule} directly recovers the relation $\nabla_\theta \mathcal{L}(\theta) = 2\eta\,\nabla_\theta \cF_p(q_\theta)$.
\end{proof}

\section{Additional results}
\label{sec:additional_results}
In Figure \ref{fig:gaussians_results} to Figure \ref{fig:swiss_roll_results}, we present experimental results for various toy examples.

\begin{figure}[ht]
    \centering
    \includegraphics[width=0.9\linewidth]{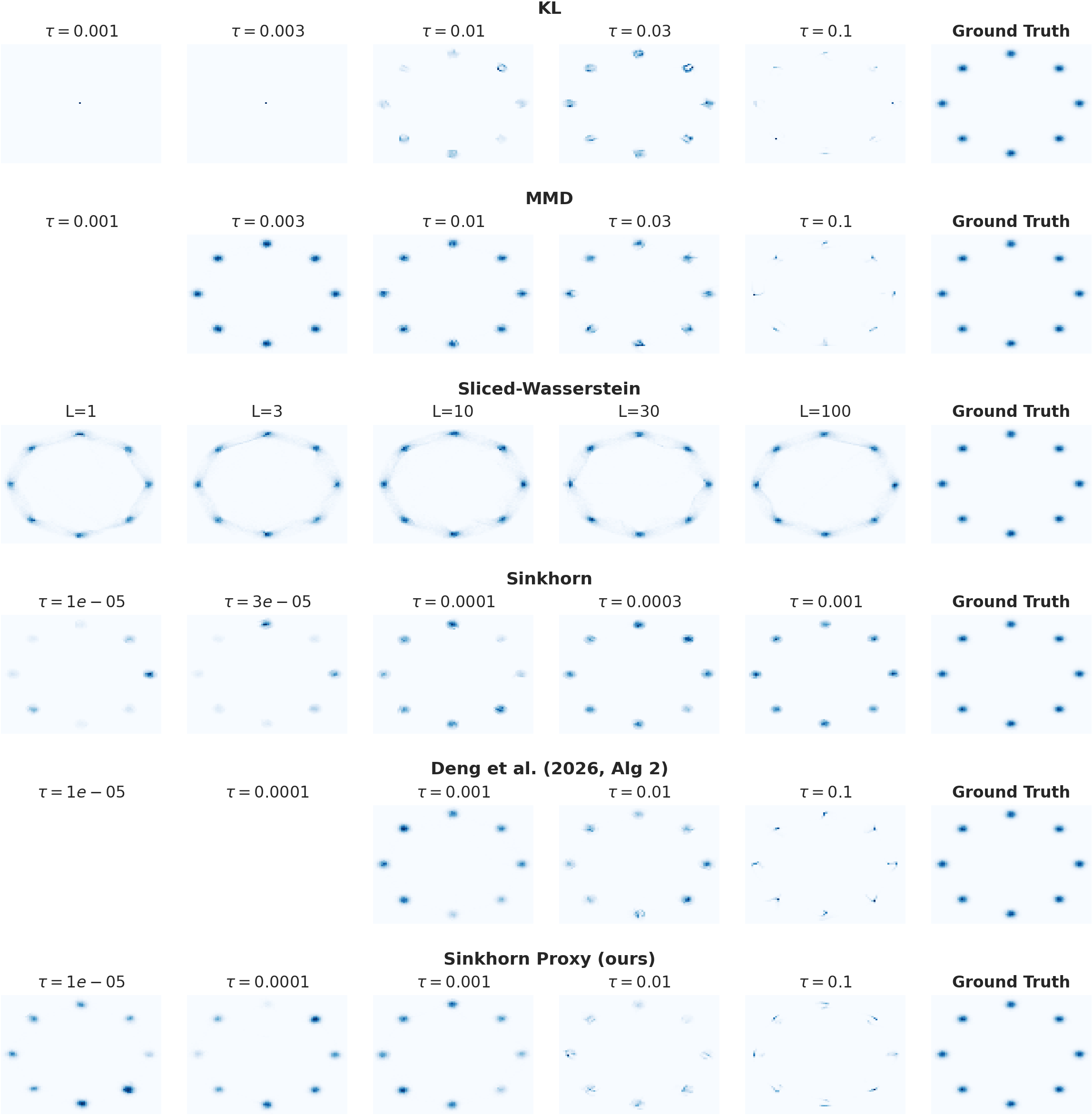}
    \caption{Results for the 8 Gaussian dataset. Empty panel means the samples have diverged.}
    \label{fig:gaussians_results}
\end{figure}

\begin{figure}
    \centering
    \includegraphics[width=0.95\linewidth]{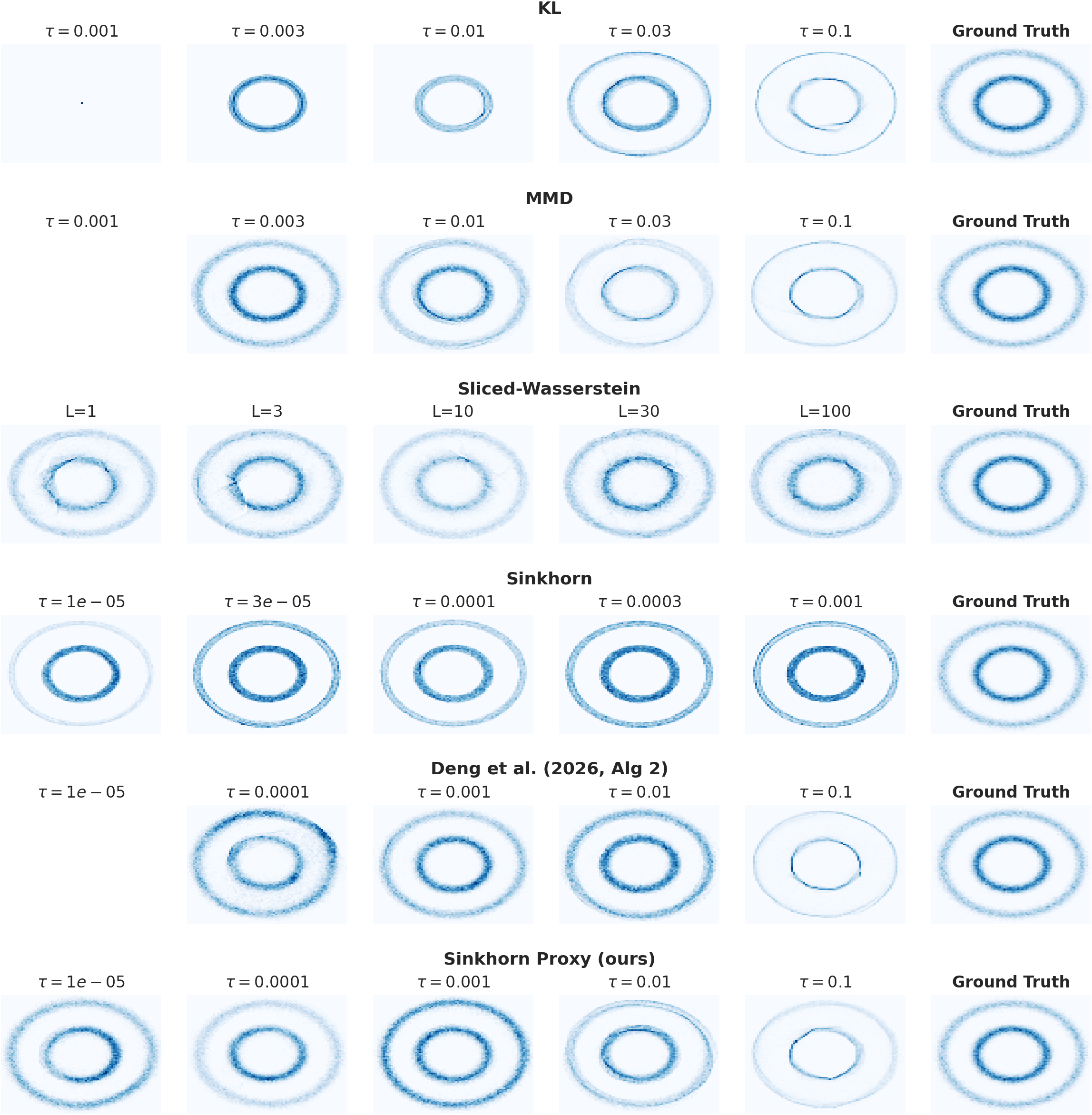}
    \caption{Results for the Circles dataset. Empty panel means the samples have diverged.}
    \label{fig:circles_results}
\end{figure}

\begin{figure}
    \centering
    \includegraphics[width=0.95\linewidth]{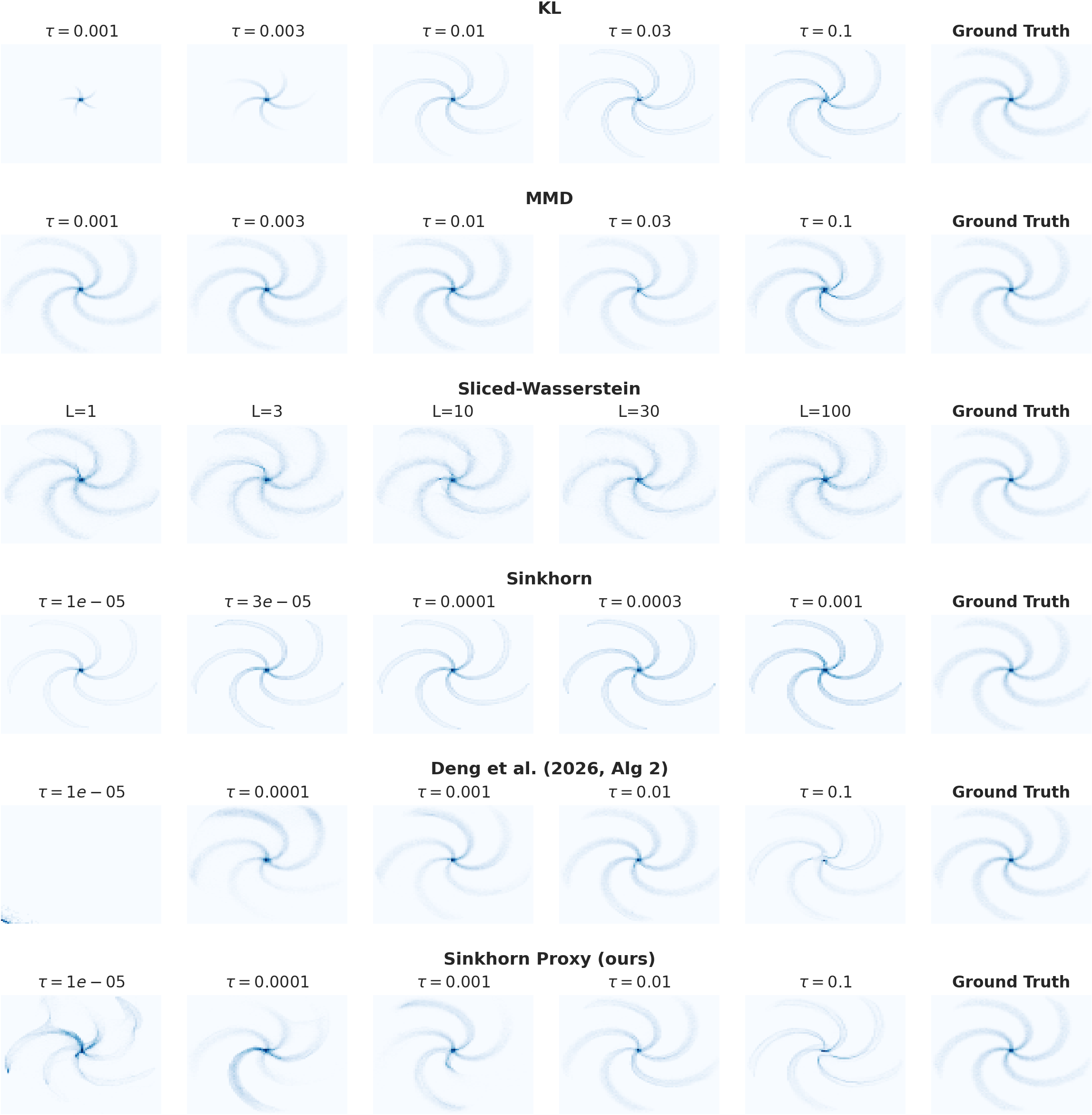}
    \caption{Results for the Pinwheel dataset. Empty panel means the samples have diverged.}
    \label{fig:pinwheel_results}
\end{figure}

\begin{figure}
    \centering
    \includegraphics[width=0.95\linewidth]{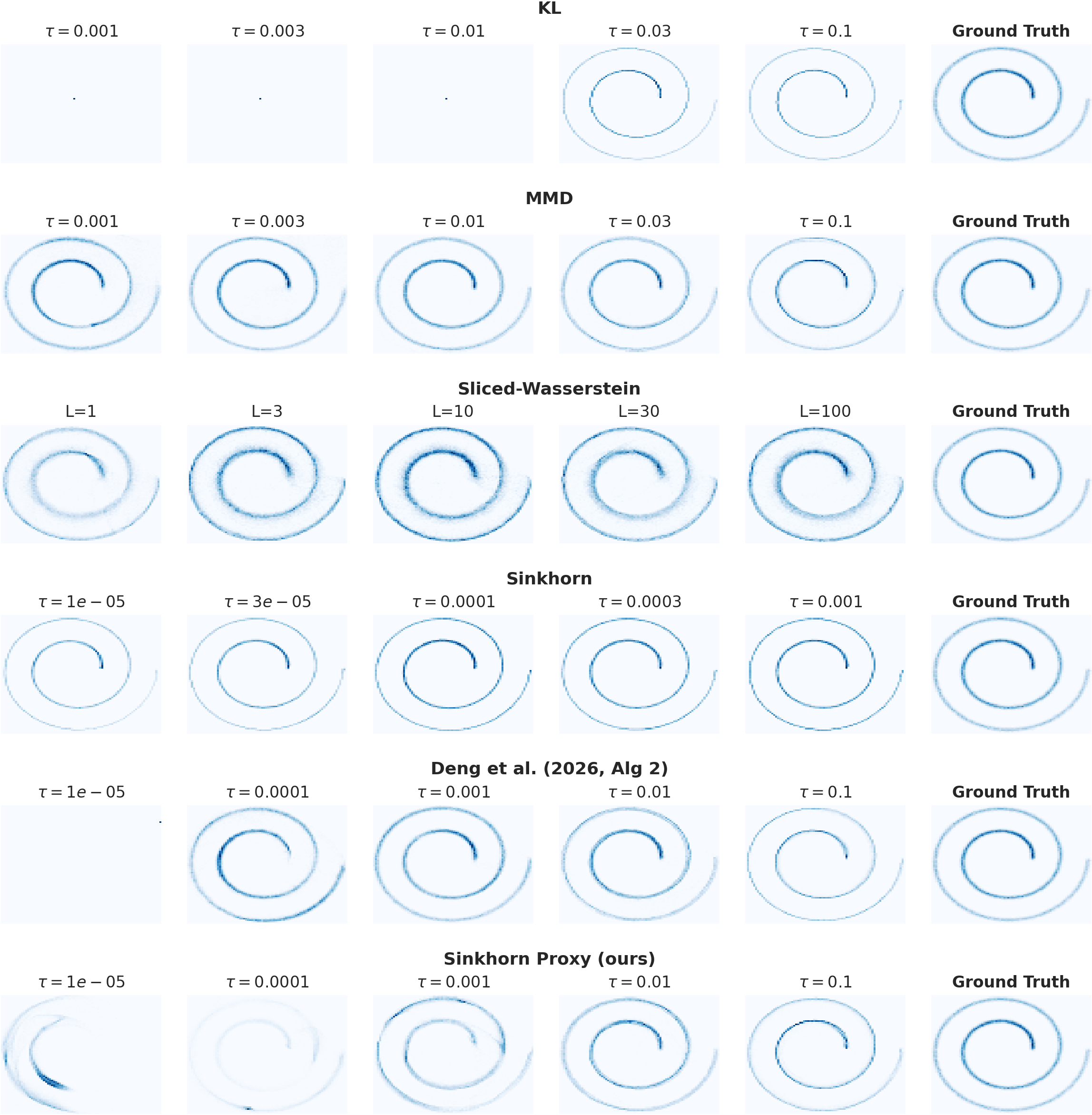}
    \caption{Results for the Swiss roll dataset. Empty panel means the samples have diverged.}
    \label{fig:swiss_roll_results}
\end{figure}

\end{document}